\DeclareMathAlphabet{\pazocal}{OMS}{zplm}{m}{n}
\let\oldReturn\Return
\renewcommand{\Return}{\State\oldReturn}
\pgfplotsset{compat=1.5}
\newtheorem{theorem}{Theorem}
\newtheorem{corollary}[theorem]{Corollary}
\newtheorem{lemma}[theorem]{Lemma}
\newtheorem{assume}{Assumption}
\DeclareMathAlphabet{\pazocal}{OMS}{zplm}{m}{n}
\newcommand{\bR}{\mathbb{R}}
\newcommand{\A}{{\pazocal{A}}}
\newcommand{\C}{{\pazocal{C}}}
\newcommand{\R}{\pazocal{R}}
\newcommand{\mP}{{\mathbb{ P}}}
\renewcommand{\P}{{\pazocal{ P}}}
\renewcommand{\l}{\ell}
\newcommand{\T}{{\pazocal{T}}}
\newcommand{\E}{{\pazocal{E}}}
\newcommand{\F}{\pazocal{F}}
\newcommand{\fp}{f_p}
\newcommand{\yt}{y_{a, i}}
\newcommand\scalemath[2]{\scalebox{#1}{\mbox{\ensuremath{\displaystyle #2}}}}
\begin{document}

\title{Federated  Learning for Heterogeneous Bandits with Unobserved  Contexts} 

\author{%
  \IEEEauthorblockN{Jiabin Lin and Shana Moothedath}
  \IEEEauthorblockA{Electrical and Computer Engineering\\
                    Iowa State University, Ames, IA, USA\\
                    Email: jiabin@iastate.edu, mshana@iastate.edu}
}

\maketitle


\begin{abstract}
We study the federated stochastic multi-arm contextual bandits with unknown contexts, in which $M$ agents  face $M$  different bandit problems and  collaborate to learn. The communication model consists of a central server and $M$ agents,  and the agents share their estimates with the central server periodically  to learn to choose optimal actions  to minimize the total regret.  We assume that the exact contexts are not observable, and the  agents observe only a distribution of  contexts. Such a situation arises, for instance, when the context itself is a noisy measurement or based on a prediction mechanism. Our goal is to develop a distributed and federated algorithm that facilitates collaborative learning among the agents to select a sequence of optimal actions to maximize the cumulative reward. By performing a  feature vector transformation, we propose an elimination-based algorithm and prove the regret bound for linearly parametrized reward functions. 
Finally, we validated the performance of our algorithm and compared it with another baseline approach using numerical simulations on synthetic data and  the real-world movielens dataset.
\end{abstract}
\section{Introduction}\label{sec:intro}
Online learning for sequential decision-making frequently arises in diverse domains, spanning control systems and robotics \cite{cheung2013autonomous, srivastava2014surveillance}, clinical trials \cite{aziz2021multi}, communications \cite{anandkumar2011distributed}, and recommender systems \cite{li2010contextual}.
The multi-arm bandits (MAB) model the iterative interaction of a system with the environment to learn optimal decisions \cite{lattimore2020bandit}. In MABs, the learner aims to maximize the reward by engaging in rounds where they interact with the environment, choose actions based on current estimates, receive rewards, and update estimates accordingly.
    
    A central challenge in bandit problems is the exploration-exploitation trade-off. While accurate estimation with minimal exploration is ideal for maximizing rewards, insufficient exploration often leads to suboptimal decisions. Striking a suitable exploration-exploitation balance is crucial and this is even more challenging in scenarios where the exact contexts are unknown. In many practical instances, the {\em exact} contexts are often unknown,  where the contexts are noisy measurements or based on certain prediction mechanisms, such as weather or stock market prediction. In such situations, decisions must be made based on limited information.

Over the last decade,  multi-agent MAB problems have been widely studied in many papers \cite{wang2019distributed, lin2022distributed_1, lin2022distributed_2, landgren2021distributed, martinez2019decentralized, hillel2013distributed, korda2016distributed, tao2019collaborative, zhu2021decentralized, sankararaman2019social, wang2022multi}. In multi-agent MAB, multiple agents/learners collaborate to learn and optimize their decisions based on their local knowledge about the environment. Recently, {\em federated} learning has gained attention due to increasing focus on security and privacy.  In federated learning, the agents do not share the raw data with the other agents or the server but rather share only the local estimates.  Federated MABs are also studied in some of the recent papers \cite{zhu2021federated, shi2021federated1, shi2021federated2,agarwal2020federated}.

Our goal in this paper is to present an algorithm for the distributed and federated contextual multi-armed bandit problem with $M$ agents/learners when the exact context is unknown. In our setting, $M$ agents work collaboratively and concurrently to learn optimal actions to maximize the total (cumulative) reward. The agents communicate with a central server in a periodic fashion and update their local models by utilizing the global information from the central server. 
Bandit learning with hidden contexts is challenging for the learner because estimating the reward function relies on noisy observations due to unknown exact contexts. To overcome these challenges, we make use of a feature mapping technique used in \cite{kirschner2019stochastic} for a single-agent bandit setting to transform the problem. After this transformation, a new set of feature vectors is presented so that under this set of $d$-dimensional context feature vectors, the reward is an unbiased observation for the action choice. Motivated by the approach in \cite{huang2021federated}, we propose a federated algorithm and prove the regret bound for the linear reward function. The key difference in our approach is that we consider a setting where the contexts are unobservable, while in \cite{huang2021federated}, the contexts are known.  

This paper makes the following contributions.
\begin{enumerate}
\item[$\bullet$] We model a distributed $M$-agent, federated stochastic linear contextual bandit where the agents face different bandits and the exact contexts are unknown.
\item[$\bullet$] We present a Fed-PECD algorithm and prove the regret bound for distributed stochastic bandits with linear parametrized reward function when the exact context is hidden and the context distribution is available. 
\item[$\bullet$] We validated the performance of our approach and compared the different models via numerical simulations on synthetic data and on real-world movielens data. 
\end{enumerate}

The organization of the paper is as follows. In Section~\ref{sec:rel}, we present the related work. In Section~\ref{sec:problem}, we discuss the notations used in the paper,  problem formulation, and a motivating scenario for the problem. In Section~\ref{sec:alg}, we present our proposed algorithm and regret analysis of the algorithm. In Section~\ref{sec:simulation}, we present the numerical experiments and in  Section~\ref{sec:con}, we give the concluding remarks.

\section{Related Work}\label{sec:rel}
Distributed bandits have been widely studied for different learning models and communication models.   
 The stochastic multi-agent linear contextual bandit problem was studied in \cite{wang2019distributed}, and an Upper Confidence Bound (UCB)-based algorithm was proposed with guarantees. The communication model considered in \cite{wang2019distributed} consists of a central server that can communicate with all the agents. In \cite{wang2019distributed}, the agents observe the exact contexts while we consider cases where the contexts are unobservable and only the context distribution is available. 
%
 Contextual bandits with context distribution (unknown context) were studied in \cite{kirschner2019stochastic} for a single-agent setting. Later, references \cite{lin2022distributed_1, lin2022distributed_2} considered a distributed contextual MAB problem when the contexts are hidden and proposed an elimination-based algorithm. We note that the learning models in \cite{wang2019distributed, lin2022distributed_1, lin2022distributed_2} considered a setting where all agents encounter the same bandit. In contrast, this paper introduces a context where agents face different bandit problems, thus characterizing {\em heterogeneous agents}.
 In \cite{Jiabin_Shana_ACC}, we studied the feature selection problem for multi-agent bandits with a fixed action set, which is different from this work, which studies CBs.

Decentralized MAB problems have been studied in many papers including \cite{landgren2021distributed, martinez2019decentralized, hillel2013distributed, korda2016distributed, tao2019collaborative, zhu2021decentralized, sankararaman2019social, wang2022multi} and different communications models were considered, such as each agent communicates with only two other agents \cite{korda2016distributed},  each agent communicate only once \cite{hillel2013distributed}, and each agent can communicate with every other agent. We note that the model considered in this paper is a centralized and federated communication model.
%
Federated decentralized MAB problems are studied in  \cite{li2020federated, dubey2020differentially} with the primary focus on differential privacy.
%
In \cite{zhu2021federated} a fully decentralized federated MAB problem was investigated where agents find the optimal arm with limited communication and can only share their rewards with neighbors. Reference \cite{shi2021federated1, shi2021federated2} considered a federated MAB problem where the local agent and central server collaborate to maximize the cumulative reward based on the local and global reward with and without personalization, respectively. A federated residual learning algorithm was introduced in \cite{agarwal2020federated} for the federated bandit problem where the agent chooses the optimal arm based on the global and local model with communication latency.

\section{Notations and Problem Formulation}\label{sec:problem}
\subsection{Notations}
The norm of a vector $z \in \bR^d$ with respect to a matrix $V \in \bR^{d \times d}$ is defined as $\|z \|_{V}: = \sqrt{z^\top V z}$. $|z|$ for a vector $ z$ denotes element-wise absolute values. Further, ${}^{\top}$ denotes matrix or vector transpose, $A^{\dagger}$ denotes the pseudo-inverse of a matrix $A$, and $\langle \cdot, \cdot \rangle$ denotes inner product. For an integer $N$, we define $\left[ N \right]:=\{1,2,\ldots,N\}$. We use $\| \cdot\|$ to denote the induced $\l_2$ norm.

\subsection{Problem Formulation}
In this section, we first present the standard linear contextual bandit problem. Let $\A$ be the action set with $|\A|=K$, $\C$ be the context set, and the environment is defined by a fixed and unknown reward function $y: \A \times \C \rightarrow \mathbb{R}$. In a linear bandit setting, at any round $t \in \mathbb{N}$, the agent observes a context $c_t \in \C$ and chooses an action $a_t \in \A$. Each context-action pair $\left( a,c\right)$, $a \in \A$ and $c \in \C$, is associated with a feature vector $\phi_{a,c} \in  \mathbb{R}^d$, i.e., $\phi_{a_t, c_t} = \phi\left( a_t, c_t\right)$. Upon selection of an action $a_t$, the agent observes a  reward $y_t \in  \mathbb{R}$, $y_t :=  \langle\theta^\star, \phi_{a_t, c_t}  \rangle + \eta_t,$
where $\theta^\star \in \mathbb{R}^d$ is the unknown reward parameter, $ \langle\theta^\star, \phi_{a_t, c_t}  \rangle  = r\left( a_t, c_t\right)$ is the expected reward for action $a_t$ at round $t$, i.e., $r\left( a_t, c_t\right) = \mathbb{E}[y_t]$, and $\eta_t$ is $\sigma$-Gaussian, additive  noise. The goal is to choose optimal actions $a_t^{\star}$ for all $t \in T$ such that  the cumulative reward, $\sum_{t=1}^T y_t$, is maximized. This is equivalent to minimizing the cumulative (pseudo)-regret denoted as 
\begin{equation}
\R_T = \sum_{t=1}^T\langle\theta^\star, \phi_{a_t^{\star}, c_t}^t  \rangle - \sum_{t=1}^T\langle\theta^\star, \phi_{a_t, c_t}^t  \rangle.\label{eq:regret}
\end{equation}
Here $a_t^{\star}$ is the optimal/best action for context $c_t$, and $a_t$ is the action chosen by the agent for context $c_t$.

In this paper, we consider a federated linear contextual bandit problem consisting of $M$ agents with {\em heterogeneous data}. 
The agent's goal is to learn collaboratively and concurrently.
 Each agent $i\in [M]$ is a user in the system whose user profile is denoted by $c_i \in \mathbb{R}^d$ and all agents have the same action set $\A=[K]$.
The context vector $c_i$ cannot be observed by the agent $i$ and only the context distribution $\mu_i$ is available.
This situation occurs in various scenarios, such as in recommender systems, where other family members may use the same user profile to log in to Netflix. Consequently, the learner lacks knowledge of the exact user information (context vector).

The goal of the agents is to learn to choose optimal actions based on their local observation. To achieve this, at every time step, the agents choose actions from their action set and receive the reward $\yt =\langle\theta_{a_{i, t}}, \phi_{a_{i, t}, c_i} \rangle+ \eta_{i, t}$, where $\phi_{a_{i, t}, c_i}$ is the feature vector of agent $i$ for action $a_{i,t}$ and  $\eta_{i, t}$ is the random noise. 
Note that the rewards received by the agents for the same action are different as it is a function of the agent's context vector, thus capturing the heterogeneous nature of the agents. Further, for action $a \in \A$, the reward parameter $\theta_a$ depends on the action and is the same across agents. Such a model hence captures the heterogeneous nature of the agents while still allowing a collaborative learning paradigm due to the shared parameter $\{\theta_a \}_{a \in \A}$.  The linear structure of the reward function captures the intrinsic correlation between rewards for different agents pulling the same arm with the same parameter $\theta_a$.

Our goal is to learn an optimal mapping $\mu_i \rightarrow a_i^{\star}$ of context distribution to action that maximizes the cumulative reward,  where $a_i^{\star}$ is the optimal action for user/agent $i$,  for all $i\in [M]$. In other words, we try to find the optimal estimated action that minimizes the cumulative regret
\begin{align}
\R\left( T\right) =  \sum_{i=1}^M \sum_{t=1}^T\langle\theta_{a_i^\star}, \phi_{a_i^{\star}, c_i} \rangle - \sum_{i=1}^M  \sum_{t=1}^T\langle\theta_{a_{i, t}}, \phi_{a_{i, t}, c_i} \rangle,\label{eq:regret_M}
\end{align}
where $a_i^\star \in \A$ is the optimal action for agent $i$ provided we know $\phi$ and $\mu_i$, but not $c_i$. 

Our communication model consists of a central server in the system which can communicate with each agent periodically with zero latency. Each agent  shares its local estimate with the central server, the server then aggregates the estimates and broadcasts a  global estimate to each agent. Each agent now updates their local model. Note that communication is always a major bottleneck, and we need to carefully consider its usage to keep the communication cost as small as possible. 
%
The agents can only share their estimates with the central server to obtain the global estimated model, without sharing raw data. In this way, the local data is always private for each agent and hence the communication model is federated. We have the following standard assumptions on the parameters.

\begin{assume}
There exist constants $s \geqslant 0$, $0 \leqslant \l, L \leqslant 1$ such that $\left\|\theta_{a}\right\|_2 \leqslant s$, $0 \leqslant \l \leqslant \left\|\phi_{a, c_i}\right\|_2 \leqslant L \leqslant 1$, for all $i \in [M]$ action $a \in \A$. 
\end{assume}
\begin{assume}\label{assume:noise}
Each element $\eta_{i, t}$ of the noise sequence $\{\eta_{i, t}\}_{i=1, t=1}^{M, \infty}$ is a 1-subgaussian sampled independent with $\mathbb{E}[\eta_{i, t}] = 0$, $\mathbb{E}[e^{\lambda \eta_{i, t}}] \leqslant e^{\frac{\lambda^2}{2}}$ for any $\lambda >0$. 
\end{assume}

\section{The Proposed Algorithm and Guarantee}\label{sec:alg}
This section presents our proposed algorithm and its regret guarantee.
\begin{algorithm}[t]
\caption{Fed-PECD: agent $i$}\label{alg: client}
\begin{algorithmic}
\State \textit {\bf Input: $T, M, K$, $\alpha$, $\fp$} 
\end{algorithmic}
\begin{algorithmic}[1] 
\State Nature chooses $\mu_t \in \P\left( \C\right)$ and learner observes $\mu_t$
\State Set $\Psi_t = \{{\psi}_{a, \mu_i}: a \in \A\}$ where $\{{ \psi}_{a, \mu_i} := \mathbb{E}_{c_i \sim \mu_i}[\phi_{a, c_i}]\}$\label{step:psi}
\State  \textit {\bf  Initialization:} Pull each arm $a \in \A$ and receive reward $\yt; \hat{\theta}_{a, i}^{0} \leftarrow \frac{\yt \psi_{a, \mu_i}}{\left\| \psi_{a, \mu_i} \right\|^{2}}$; Send $\left\{ \hat{\theta}_{a, i}^{0} \right\}_{a \in \A}$ to the server; $\A_{i}^{0} \leftarrow \A; p \leftarrow 1$

\While {not reaching the time horizon $T$}
\State Receive $\left\{ \left( \hat{\theta}_{a}^{p}, V_{a}^{p}\right) \right\}_{a \in \A^{p-1}}$ from the server
\For {$a \in \A_{i}^{p-1}$}
\State $\hat{r}_{a, i}^{p} \leftarrow \psi_{a, \mu_i}^{\top} \hat{\theta}_{a}^{p}, \quad u_{a, i}^{p} \leftarrow \alpha \left\| \psi_{a, \mu_i} \right\|_{V_{a}^{p}} / \l$
\EndFor
\State $\scalemath{0.9}{\hat{a}_{i}^{p} \leftarrow \arg \max\limits_{a \in \A_{i}^{p-1}} \hat{r}_{a, i}^{p}, \A_{i}^{p} \leftarrow \{a \in \A_{i}^{p-1} \mid \hat{r}_{a, i}^{p} + u_{a, i}^{p} \geqslant \hat{r}_{\hat{a}_{i}^{p}, i}^{p} - u_{\hat{a}_{i}^{p}, i}^{p}\}}$\label{line:action-set}
\State Send $\A_{i}^{p}$ to the central server
\State Receive $f_{a, i}^{p}$ for all $a \in \A_{i}^{p}$
\For {$a \in \A_{i}^{p}$}
\State $\quad$ Pull arm $a$ $f_{a, i}^{p}$ times and receive $\left\{\yt\right\}_{t \in \T_{a, i}^{p}}$
\EndFor
\State Send $\left\{ \hat{\theta}_{a, i}^{p} \right\}_{a \in \A_{i}^{p}}$ to the server; pull $ \hat{a}_{i}^{p} $ until phase length equals $ \fp + K $
\State $ p \leftarrow p + 1 $
\EndWhile
\end{algorithmic}
\end{algorithm}
\subsection{Federated Phased Elimination with Context Distribution (Fed-PECD) Algorithm}
\begin{algorithm}[t]
\caption{Fed-PECD: Central server}\label{alg: server}
\begin{algorithmic}
\State \textit {\bf Input: $T, M, K$, $\alpha$, $\fp$} 
\end{algorithmic}
\begin{algorithmic}[1] 
\State  \textit {\bf  Initialization:} Receive $\left\{\hat{\theta}_{a, i}^{0}\right\}_{a, i}; \bar{e}_{a, i} \leftarrow \frac{\hat{\theta}_{a, i}^{0}}{\left\|\hat{\theta}_{a, i}^{0}\right\|}$ for all $i \in[M], a \in\A; V_{a}^{1} \leftarrow ( \sum_{i \in[M]} \frac{\hat{\theta}_{a, i}^{0} ( \hat{\theta}_{a, i}^{0})^{\top}}{\left\|\hat{\theta}_{a, i}^{0}\right\|})^{\dagger}$, $\hat{\theta}_{a}^{1} \leftarrow V_{a}^{1} \left( \sum_{i \in[M]} \hat{\theta}_{a, i}^{0}\right)$ for all $a \in \A;$ Broadcast $\left\{\hat{\theta}_{a}^{1}, V_{a}^{1}\right\}_{a \in \A}; p \leftarrow 1$
\While {not reaching the time horizon $T$}
\State Receive $\scalemath{0.89}{\left\{\A_{i}^{p}\right\}_{i \in[M]}}$; Set $\scalemath{0.89}{\A^{p} \leftarrow \cup_{i=1}^{M} \A_{i}^{p}}, \scalemath{0.89}{\R_{a}^{p} \leftarrow\left\{i: a \in \A_{i}^{p}\right\}}$
\State Solve the multi-agent G-optimal design problem, and obtain solution $\pi^{p} = \left\{\pi_{a, i}^{p}\right\}_{i \in[M], a \in \A_{i}^{p}}$
\State For every agent $i$, send $\left\{f_{a, i}^{p} := \left\lceil \pi_{a, i}^{p} \fp\right\rceil \right\}_{a \in \A_{i}^{p}}$
\State Receive $\left\{( a, \hat{\theta}_{a, i}^{p})\right\}_{a \in \A_{i}^{p}}$ from each agent $i$
\For {$a \in A^{p}$}
\State $\scalemath{0.9}{V_{a}^{p+1} \leftarrow ( \sum_{i \in \R_{a}^{p}} f_{a, i}^{p} \frac{\hat{\theta}_{a, i}^{p} \left( \hat{\theta}_{a, i}^{p}\right)^{\top}}{\left\| \hat{\theta}_{a, i}^{p} \right\|^{2}})^{\dagger}, \hat{\theta}_{a}^{p+1} \leftarrow V_{a}^{p+1} \left( \sum_{i \in \R_{a}^{p}} f_{a, i}^{p} \hat{\theta}_{a, i}^{p}\right)}$
\EndFor
\State Broadcast $\left\{ ( \hat{\theta}_{a}^{p+1}, V_{a}^{p+1}) \right\}_{a \in \A^{p}}$ to all agents
\State $p \leftarrow p+1$
\EndWhile
\end{algorithmic}
\end{algorithm}
The Fed-PECD algorithm consists of two parts: an agent part and a central server part. We present the pseudocode of the algorithm in Algorithm~\ref{alg: client} and Algorithm~\ref{alg: server}. The agents and the server work in phases. The agents communicate with the central server at the end of each phase and update their local models. We denote $\A_i^p$ as the active arm set for agent $i$ in phase $p$ and $\A^p = \cup_{i = 1}^{M} \A_i^p$ as the active arm set for all the agents in phase $p$, where $M$ is the total number of agents. Let $\R_a^p = \{i: a \in \A_i^p \}$ be the set of agents with active arm $a$ in phase $p$, $\T_{a, i}^{p}$ be the number of times we pulled action $a$ for agent $i$ in phase $p$. 
We denote the length of phase $p$ as $\fp + K$, where $K$ is the number of actions.
Now, we elaborate on the process of our algorithm. 

In the initialization phase, the agents explore by pulling the arms $a \in \A$ and receive rewards $\yt$ for the chosen action. Each agent then obtains a local estimate of the reward parameter $\hat{\theta}^0_{a, i}$ and shares it with the central server. The central server aggregates all the local estimates of the agents to obtain a global estimate $\hat{\theta}_a$ and $V_a$. Then the central server broadcasts $(\hat{\theta}_a, V_a)$ pairs to each agent, and the agents, upon receiving this information, will update their local models before the next phase. This completes the initialization phase and the agents proceed to the first phase of the learning algorithm. 

At each phase $p$, after receiving $( \hat{\theta}_a, V_a)$ pairs, each agent first computes the estimated reward $\hat{r}_{a, i}^{p}$ and confidence interval $u_{a, i}^{p}$ for each action $a$. By using the estimate pair $( \hat{r}_{a, i}^{p}, u_{a, i}^{p})$, the agent finds the optimal action and updates the active action set as in Line~\ref{line:action-set} in Algorithm~\ref{alg: client}. In other words, the actions that result in a low reward are eliminated. The agents communicate the current active action set with the central server, which is then used to obtain $\A^p$ and $\R_a^p$. The central server will also compute $f_{a, i}^{p}$ and broadcast to each agent $i$ by solving the multi-agent G-optimal design \cite[Chapter~21]{lattimore2020bandit} to find the distribution $\pi_{a, i}$ for each action $a$ of each agent. 
After receiving $f_{a, i}^{p}$ from the central server, the agents choose the actions from the active action set according to $f_{a, i}^{p}$ and find the average reward. Then each agent updates $\hat{\theta}_{a, i}^{p}$, shares it with the central server, and updates its potential global matrix with this new $\hat{\theta}_{a, i}^{p}$. Note that the agents do not share $\psi_{a, \mu_i}$  or the chosen actions with the central server, thus the local data is private and agents only share their estimates with the central server, and thus our algorithm is {\em federated}. 

We now present the regret and communication bounds for our algorithm. Recall that $\fp+K$ denotes the length of the $p$-th phase, where $K$ denotes the number of actions.
\begin{theorem}\label{Thm}
Consider time horizon $T$ that consists of $H$ phases with $\fp = cn^p$, where $c$ and $n > 1$ are fixed integers and $n^p$ denotes the $pth$-power of $n$. Let 
$$
\alpha = \min\{\sqrt{2 \log \frac{2 M K H}{\delta}}, \sqrt{2 \log \frac{K H}{ \delta} + d \log \left( k e\right)}\}
$$
where $k > 1$ is a number satisfying $kd \geqslant 2 log\left( KH / \delta\right) + d log\left( ke\right)$. Then, with probability (w.p) at least $1 - \delta$ the cumulative regret of our algorithm scales in 
\[
O\left( \frac{L}{\l} \sqrt{d K M T( \log (K( \log T) / \delta) + \min \{ d, \log M\})}\right)
\]
and communication cost scales in $O\left( M d^2 K \log{T}\right)$.
\end{theorem}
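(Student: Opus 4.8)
The plan is to follow the standard phased-elimination template, adapted to the federated aggregation and to the feature transformation $\psi_{a,\mu_i}=\mathbb{E}_{c_i\sim\mu_i}[\phi_{a,c_i}]$. The first and most important step is to establish a high-probability confidence event. Writing $\bar r_{a,i}:=\langle\theta_a,\psi_{a,\mu_i}\rangle$ for the expected (over context draws) reward, I would first record that the transformation makes each local observation unbiased, $\mathbb{E}[\yt]=\bar r_{a,i}$, and that the server's rank-one aggregates $\hat\theta_{a,i}^p(\hat\theta_{a,i}^p)^\top/\|\hat\theta_{a,i}^p\|^2$ collapse to the fixed projections $\psi_{a,\mu_i}\psi_{a,\mu_i}^\top/\|\psi_{a,\mu_i}\|^2$, so that $\hat\theta_a^p$ is an unbiased least-squares estimate of $\theta_a$ on the span of the active directions. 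The goal of this step is to show that, under the stated choice of $\alpha$, with probability at least $1-\delta$ the bound $|\hat r_{a,i}^p-\bar r_{a,i}|\leqslant u_{a,i}^p$ holds simultaneously for every phase $p\leqslant H$, agent $i$, and active arm $a$. The two terms in the definition of $\alpha$ correspond to two ways of controlling the self-normalized error $\langle\psi_{a,\mu_i},\hat\theta_a^p-\theta_a\rangle$: a direct sub-Gaussian tail bound with a union bound over all $MKH$ scalar estimates (giving the $\log(2MKH/\delta)$ term), and a covering-net argument over the $d$-dimensional unit ball that trades the $\log M$ factor for a dimension term $d\log(ke)$; taking the minimum is precisely what produces the $\min\{d,\log M\}$ factor in the final rate. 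The constants $L$ and $1/\l$ enter here through the boundedness assumption (Assumption~1), $\l\leqslant\|\psi_{a,\mu_i}\|\leqslant L$, which controls the sub-Gaussian proxy of the averaged observations and the normalization of the directions.

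Conditioned on this event, the second step is the usual elimination bookkeeping. I would show that the optimal arm $a_i^\star$ is never removed in Line~\ref{line:action-set}, since its confidence interval always overlaps the empirical leader's, and that any arm surviving phase $p$ has gap $\Delta_{a,i}:=\bar r_{a_i^\star,i}-\bar r_{a,i}$ bounded by a constant multiple of $u_{a,i}^p+u_{a_i^\star,i}^p$. The third step converts these width bounds into a regret bound using the multi-agent $G$-optimal design. For each action $a$ the estimation decouples into its own problem over the agents in $\R_a^p$, and Kiefer--Wolfowitz guarantees a design making $\|\psi_{a,\mu_i}\|_{V_a^p}^2$ of order $d/\sum_{i}f_{a,i}^{p-1}$; since each agent must split its per-phase budget $\fp$ among up to $K$ active arms, the effective sample count per (agent, action) pair scales like $\fp/K$ and this becomes of order $dK/(M\fp)$, yielding surviving gaps $\Delta_{a,i}$ of order $(\alpha L/\l)\sqrt{dK/(M\fp)}$. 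Multiplying by the $\approx\fp$ pulls incurred in the next phase and summing over the $M$ agents gives per-phase regret $O((\alpha L/\l)\sqrt{dKM\fp})$; summing over the geometric phase lengths $\fp=cn^p$ (so that $H=O(\log_n T)$ and, per agent, $\sum_p\fp=\Theta(T)$ with the sum dominated by its last term) produces the claimed $O\left(\frac{L}{\l}\sqrt{dKMT(\log(K\log T/\delta)+\min\{d,\log M\})}\right)$ bound.

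For the communication cost I would simply tally messages per phase: each agent uploads one active set and at most $K$ parameter vectors in $\bR^d$, costing $O(Kd)$, while the server broadcasts $K$ pairs $(\hat\theta_a^{p+1},V_a^{p+1})$ with $V_a^{p+1}\in\bR^{d\times d}$, costing $O(Kd^2)$ per agent. Multiplying by $M$ agents and $H=O(\log T)$ phases gives $O(Md^2K\log T)$, the $d^2$ term from broadcasting $V_a^{p+1}$ dominating.

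The step I expect to be the main obstacle is the first one, specifically making the covering-net/self-normalized concentration rigorous for the federated estimator. The complications are that the aggregation matrices $V_a^{p+1}$ are pseudo-inverses of rank-deficient Gram matrices in early phases, so $\hat\theta_a^{p+1}$ is identified only on a subspace and one must argue the relevant directions $\psi_{a,\mu_i}$ lie in that subspace; that the per-observation error combines sub-Gaussian reward noise with the context-sampling fluctuation $\langle\theta_a,\phi_{a,c_i}-\psi_{a,\mu_i}\rangle$; and that the data forming $V_a^{p+1}$ and $\hat\theta_a^{p+1}$ are coupled across agents through the shared design $\pi^p$. Handling this dependence carefully — e.g. by conditioning on the active sets and allocations $\{f_{a,i}^p\}$, which are measurable with respect to strictly earlier phases — is where most of the technical care will be needed. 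A secondary difficulty is verifying that the multi-agent $G$-optimal design actually attains the $d/\sum_i f_{a,i}^p$ variance guarantee under the coupled per-agent budget constraints, rather than only the textbook single-agent guarantee.
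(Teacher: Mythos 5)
Your overall template matches the paper's: a high-probability confidence event built from the minimum of two deviation bounds (giving the $\min\{d,\log M\}$ factor), preservation of $a_i^\star$ under elimination, a per-phase width bound via the multi-agent G-optimal design of order $\sqrt{dKM\fp}$, and a geometric sum over phases. The communication tally is also consistent with what the paper inherits from \cite{huang2021federated}.

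There is, however, one genuine gap in your first step. You anchor the concentration at $\bar r_{a,i}:=\langle\theta_a,\psi_{a,\mu_i}\rangle$ and assert $\mathbb{E}[\yt]=\bar r_{a,i}$. In this paper's model each agent's context $c_i$ is a \emph{fixed} (but unobserved) user profile, not re-drawn from $\mu_i$ at every round, so the observation mean is $r_{a,i}=\phi_{a,c_i}^{\top}\theta_a$, and the regret in \eqref{eq:regret_M} is defined against $r_{a,i}$, not $\bar r_{a,i}$. Concentration of $\hat r_{a,i}^p$ around $\bar r_{a,i}$ alone leaves an $O(1)$ discrepancy $(\psi_{a,\mu_i}-\phi_{a,c_i})^{\top}\theta_a$ that does not vanish with more samples, so your step two would not control the gaps $\Delta_{a,i}$ that actually appear in the regret. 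The paper closes exactly this hole: in Lemma~\ref{LT1} the deviation $\hat r_{a,i}^p-r_{a,i}$ is decomposed so that the context-discrepancy terms (bounded by $2$ via Assumption~1) are folded into an inflated sub-Gaussian proxy $\sigma_{a,i}^p=\frac{\sqrt{10}}{\l}\|\psi_{a,\mu_i}\|_{V_a^p}$, and Lemma~\ref{LTadd1} shows $\frac{\l\sigma_{a,i}^p}{\sqrt{10L}}\geqslant 1$ so that the residual additive constant can be absorbed into a multiple of $\sigma_{a,i}^p$ inside the event $\E(\alpha)$. You flag the ``context-sampling fluctuation'' as an anticipated obstacle, but your plan does not supply this absorption mechanism, and without it the confidence width $u_{a,i}^p$ does not dominate the bias; this is the one missing idea rather than a routine technicality.

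A smaller, non-fatal difference: for the $\alpha_2$ branch you propose a covering-net argument over the unit ball, whereas the paper writes the self-normalized error as $\frac{1}{2}\|A\zeta\|^2$ for an idempotent, $\F_{p-1}$-measurable matrix $A$ of rank $d_a^p\leqslant d$ and applies the quadratic-form moment bound of Lemma~\ref{LT0} followed by a Chernoff optimization. Both routes yield a $d\log(ke)$-type term; the paper's is arguably cleaner here because $A^2=A$ makes the determinant in Lemma~\ref{LT0} explicit, and it sidesteps the rank-deficiency issue you worry about, since only the range of $V_a^p$ ever enters.
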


Proof of Theorem~\ref{Thm} is presented in Subsection~\ref{sec:regret analysis}. 

\subsection{Regret Analysis}\label{sec:regret analysis}
To bound our algorithm's cumulative regret and communication cost, we extend the approach in  \cite{huang2021federated} to the case where the contexts are hidden and not observable. We show that even when the contexts are unobservable, the proof approach extends and our algorithm achieves the same order-wise bounds as in \cite{huang2021federated}. We present the proof for the sake of completeness. We first present the preliminary lemmas and then prove the main result. 

We first define an event $\E\left( \alpha\right)$ as
$$
\E\left( \alpha\right) := \{\exists p \in [H], i \in [M], a \in \A_{i}^{p-1}, \left|\hat{r}_{a, i}^{p}-r_{a, i}\right| \geqslant u_{a, i}^{p} = \alpha \sigma_{a, i}^{p}\},
$$
where $\scalemath{0.9}{\alpha = \min \left\{\sqrt{2 \log \left( 2 M K H / \delta\right)}, \sqrt{2 \log \left( K H / \delta\right) + d \log \left( k e\right)}\right\} }$, and $\sigma_{a, i}^{p}:= \frac{\sqrt{10}}{\ell} \left\|\psi_{a, \mu_i}\right\|_{V_{a}^{p}}$.

We refer to $\E\left( \alpha\right)$ as a ``bad" event and $\E^{c}\left( \alpha\right)$ as a "good" event. We define $\F_{p} := \left\{\hat{\theta}_{a, i}^{p}\right\}_{p \in [H], i \in [M], a \in \A_{i}^{p}}$, the information available at the end of phase $p$. 

We first show that the good event $\E^c\left( \alpha\right)$ occurs with high probability. Then, we show that if a good event occurs, the best action is not eliminated from the action set during the arm elimination procedure. 
\begin{lemma} \label{LT2}
For our Fed-PECD algorithm, $\mP[\E \left( \alpha\right)] \leq \delta$.
\end{lemma}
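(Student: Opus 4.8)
The plan is to reduce the bad event $\E(\alpha)$ to a conditional subgaussian concentration statement, and then bound its probability by two complementary tail arguments, keeping the tighter one. First I would note that by the algorithm $\hat r_{a,i}^{p}=\psi_{a,\mu_i}^{\top}\hat\theta_a^{p}$, and since the per-round reward is linear and $\psi_{a,\mu_i}=\mathbb{E}_{c_i\sim\mu_i}[\phi_{a,c_i}]$, the context-averaged mean reward is $r_{a,i}=\langle\theta_a,\psi_{a,\mu_i}\rangle$, so that $\hat r_{a,i}^{p}-r_{a,i}=\psi_{a,\mu_i}^{\top}(\hat\theta_a^{p}-\theta_a)$. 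I would then unpack the server update to expose $\hat\theta_a^{p}$ as a pseudo-inverse weighted least-squares fit: writing $G_a^{p}:=(V_a^{p})^{\dagger}=\sum_{i\in\R_a^{p-1}}f_{a,i}^{p-1}\,\psi_{a,\mu_i}\psi_{a,\mu_i}^{\top}/\|\psi_{a,\mu_i}\|^{2}$ (using $\hat\theta_{a,i}^{p-1}(\hat\theta_{a,i}^{p-1})^{\top}/\|\hat\theta_{a,i}^{p-1}\|^{2}=\psi_{a,\mu_i}\psi_{a,\mu_i}^{\top}/\|\psi_{a,\mu_i}\|^{2}$ and $\psi_{a,\mu_i}^{\top}\hat\theta_{a,i}^{p-1}=\bar y_{a,i}^{p-1}$, the within-phase average reward), one gets $\hat\theta_a^{p}-\theta_a=V_a^{p}\sum_i f_{a,i}^{p-1}(\bar y_{a,i}^{p-1}-r_{a,i})\,\psi_{a,\mu_i}/\|\psi_{a,\mu_i}\|^{2}$ on the range of $G_a^p$. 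Conditioning on the history $\F_{p-1}$, the active set, the counts $f_{a,i}^{p-1}$, and hence $V_a^{p}$ are fixed, while the fresh phase-$(p-1)$ rewards are independent and centered, so this is a deterministic linear image of independent noises.

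Next I would establish the variance proxy. Each $\bar y_{a,i}^{p-1}-r_{a,i}$ is an average of $f_{a,i}^{p-1}$ independent $1$-subgaussian noises (Assumption~\ref{assume:noise}), hence $1/f_{a,i}^{p-1}$-subgaussian and independent across agents. Expanding $\psi_{a,\mu_i}^{\top}(\hat\theta_a^{p}-\theta_a)=\sum_j c_j\,(\bar y_{a,j}^{p-1}-r_{a,j})$ with $c_j=f_{a,j}^{p-1}\psi_{a,\mu_i}^{\top}V_a^{p}\psi_{a,\mu_j}/\|\psi_{a,\mu_j}\|^{2}$ gives a scalar subgaussian variable with proxy $\sum_j c_j^{2}/f_{a,j}^{p-1}=\sum_j f_{a,j}^{p-1}(\psi_{a,\mu_i}^{\top}V_a^{p}\psi_{a,\mu_j})^{2}/\|\psi_{a,\mu_j}\|^{4}$; using the lower norm bound $\|\psi_{a,\mu_j}\|\ge\l$ (first assumption) to replace $\|\psi_{a,\mu_j}\|^{-4}$ by $\l^{-2}\|\psi_{a,\mu_j}\|^{-2}$ and the pseudo-inverse identity $V_a^{p}G_a^{p}V_a^{p}=V_a^{p}$, this collapses to at most $\l^{-2}\|\psi_{a,\mu_i}\|_{V_a^{p}}^{2}$. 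Thus, conditioned on $\F_{p-1}$, $\hat r_{a,i}^{p}-r_{a,i}$ is zero-mean subgaussian with proxy at most $(\sigma_{a,i}^{p})^{2}$, the numerical constant $\sqrt{10}$ in $\sigma_{a,i}^{p}=\tfrac{\sqrt{10}}{\l}\|\psi_{a,\mu_i}\|_{V_a^{p}}$ providing the slack that absorbs the (randomized) initialization design and the G-optimal rounding.

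Finally I would bound $\mP[\E(\alpha)]$ by two routes and take the minimum. Route (a): for a fixed triple $(p,i,a)$, the scalar subgaussian tail gives $\mP[\,|\hat r_{a,i}^{p}-r_{a,i}|\ge t\,\sigma_{a,i}^{p}\mid\F_{p-1}\,]\le 2e^{-t^{2}/2}$; taking $t=\sqrt{2\log(2MKH/\delta)}$ and union-bounding over the at most $MKH$ triples (with $H=O(\log_n T)$ phases fixed by $\fp=cn^{p}$) yields $\mP[\E(\alpha)]\le\delta$ for the first value of $\alpha$. Route (b): to avoid the factor $M$ in the union, for each fixed $(p,a)$ I would instead invoke a self-normalized (ellipsoidal) bound that holds \emph{uniformly over all directions} $\psi$ at once, $|\psi^{\top}(\hat\theta_a^{p}-\theta_a)|\le\|\psi\|_{V_a^{p}}\sqrt{2\log(1/\delta')+d\log(ke)}$, obtained by covering the unit $V_a^{p}$-ball with a net of log-cardinality $d\log(ke)$; union-bounding only over the $KH$ pairs $(p,a)$ with $\delta'=\delta/(KH)$ gives the second value of $\alpha$, and the hypothesis $kd\ge 2\log(KH/\delta)+d\log(ke)$ is precisely the condition guaranteeing the net radius is admissible. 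Since each value is a valid confidence width, the lemma's $\alpha=\min\{\cdot,\cdot\}$ merely selects whichever analysis is tighter, and in either case $\mP[\E(\alpha)]\le\delta$. The main obstacle is Route (b): producing the uniform-over-agents bound with only a $d\log(ke)$ overhead (rather than $\log M$) requires a careful covering/peeling argument over the history-dependent pseudo-inverse designs $V_a^{p}$ together with verification of the net-radius condition; by contrast, the reduction in the first two paragraphs is essentially bookkeeping once one recognizes $\hat\theta_a^{p}$ as an ordinary weighted least-squares estimator in the transformed features $\psi_{a,\mu_i}$.
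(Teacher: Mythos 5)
Your overall skeleton --- reduce $\hat r_{a,i}^{p}-r_{a,i}$ to a conditionally sub-Gaussian quantity, then run two tail bounds (a per-triple Hoeffding bound union-bounded over $MKH$ triples, and an agent-uniform bound union-bounded over $KH$ pairs) and take the minimum --- is exactly the paper's architecture, and your Route (a) matches part (i) of the paper's proof. However, there are two genuine gaps. First, you set $r_{a,i}=\langle\theta_a,\psi_{a,\mu_i}\rangle$ and conclude that $\hat r_{a,i}^{p}-r_{a,i}=\psi_{a,\mu_i}^{\top}(\hat\theta_a^{p}-\theta_a)$ is zero-mean. In the paper $r_{a,i}=\phi_{a,c_i}^{\top}\theta_a$ (this is what the regret in \eqref{eq:regret_M} charges), and each local estimator carries a bias term $\psi_{a,\mu_j}(\phi_{a,c_j}^{\top}-\psi_{a,\mu_j}^{\top})\theta_a/\|\psi_{a,\mu_j}\|^{2}$; see Eq.~\eqref{eq:label1_1}. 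Handling this bias is the main content of the lemma in the hidden-context setting: the paper bounds each bias by $2$, folds it into the noise as $\xi_{a,j}^{p-1}+2f_{a,j}^{p-1}$ (a $\sqrt{5f_{a,j}^{p-1}}$-sub-Gaussian variable), and invokes Lemma~\ref{LTadd1} to absorb the residual additive $2$ into $\alpha\sigma_{a,i}^{p}$ --- this is where the $\sqrt{10}$ in $\sigma_{a,i}^{p}$ comes from, not from ``initialization design and G-optimal rounding'' as you suggest. Your variance proxy $\l^{-2}\|\psi_{a,\mu_i}\|_{V_a^{p}}^{2}$ is therefore too small by the factor $10$, and your deviation variable is not centered, so neither tail bound applies as written.

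Second, your Route (b) is a genuinely different, and unexecuted, technique from the paper's. You propose a covering net of the unit $V_a^{p}$-ball with log-cardinality $d\log(ke)$ and read the hypothesis $kd\geqslant 2\log(KH/\delta)+d\log(ke)$ as a net-radius condition. The paper uses no net: after Cauchy--Schwarz the agent-dependence factors out as $\|\psi_{a,\mu_i}\|_{V_a^{p}}$, leaving a single agent-independent quadratic form $\Xi_{a,p}=\frac{1}{2}\|A\zeta\|^{2}$ in which $A$ is an idempotent, history-measurable matrix of rank $d_a^{p}\leqslant d$; Lemma~\ref{LT0} (a moment-generating-function/determinant bound) together with a Chernoff argument at $\lambda=(\alpha_2^{2}-d)/(2\alpha_2^{2})$ yields a tail of the form $(\alpha_2^{2}/d)^{d/2}e^{-(\alpha_2^{2}-d)/2}$, and the condition on $k$ is exactly what closes that optimization (it reduces to $\alpha_2^{2}\leqslant dk$), not a covering condition. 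To make Route (b) a proof you would need to actually construct the net, control the discretization error under the pseudo-inverse (possibly rank-deficient) design $V_a^{p}$, and verify the constants; as written you have only asserted the uniform bound you need.
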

For proof of Lemma~\ref{LT2}, we refer to the Appendix (Section~\ref{sec:LT2_proof}).
\begin{lemma} \label{LT3}
If $\E^{c}\left( \alpha\right)$ occurs, we must have $a_{i}^{\star} \in \A_{i}^{p}$, i.e., any optimal arm will never be eliminated.
\end{lemma}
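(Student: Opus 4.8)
\textbf{Proof proposal for Lemma~\ref{LT3}.}

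The plan is to show that under the good event $\E^c(\alpha)$, the optimal arm $a_i^\star$ survives the elimination rule in Line~\ref{line:action-set} of Algorithm~\ref{alg: client} in every phase. I would argue by induction on the phase index $p$. The base case is immediate, since $\A_i^0 \leftarrow \A$ contains every arm including $a_i^\star$. For the inductive step, I would assume $a_i^\star \in \A_i^{p-1}$ and show $a_i^\star \in \A_i^{p}$, i.e. that the elimination criterion $\hat{r}_{a,i}^{p} + u_{a,i}^{p} \geqslant \hat{r}_{\hat{a}_i^{p}, i}^{p} - u_{\hat{a}_i^{p}, i}^{p}$ is satisfied when $a = a_i^\star$.

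The key mechanism is that, on the good event $\E^c(\alpha)$, the estimates concentrate: for every $p$, $i$, and every active arm $a$ we have $|\hat{r}_{a,i}^{p} - r_{a,i}| < u_{a,i}^{p}$, where $r_{a,i} = \langle \theta_a, \psi_{a,\mu_i}\rangle$ is the true (distribution-averaged) reward. I would use this two-sided bound in two places. First, applied to $a_i^\star$, it gives $\hat{r}_{a_i^\star, i}^{p} + u_{a_i^\star, i}^{p} > r_{a_i^\star, i}$. Second, applied to the empirical leader $\hat{a}_i^{p}$, it gives $r_{\hat{a}_i^{p}, i} > \hat{r}_{\hat{a}_i^{p}, i}^{p} - u_{\hat{a}_i^{p}, i}^{p}$. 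Chaining these with the optimality of $a_i^\star$, namely $r_{a_i^\star, i} \geqslant r_{\hat{a}_i^{p}, i}$ (which holds because $a_i^\star$ is the best action for agent $i$ and $\hat{a}_i^{p}$ is just some active arm), yields
\[
\hat{r}_{a_i^\star, i}^{p} + u_{a_i^\star, i}^{p} > r_{a_i^\star, i} \geqslant r_{\hat{a}_i^{p}, i} > \hat{r}_{\hat{a}_i^{p}, i}^{p} - u_{\hat{a}_i^{p}, i}^{p},
\]
which is exactly the condition for $a_i^\star$ to be retained in $\A_i^{p}$. This closes the induction and establishes that $a_i^\star \in \A_i^{p}$ for all $p$.

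The one subtlety I would be careful about is that the event $\E^c(\alpha)$ guarantees concentration only for arms in the active set $\A_i^{p-1}$, so I must ensure both $a_i^\star$ and $\hat{a}_i^{p}$ lie in $\A_i^{p-1}$ before invoking the bound. The inductive hypothesis supplies $a_i^\star \in \A_i^{p-1}$, and $\hat{a}_i^{p}$ is by definition $\arg\max$ over $\A_i^{p-1}$, so both memberships hold and the concentration inequalities are applicable. I would also note that the confidence widths $u_{a,i}^{p} = \alpha \sigma_{a,i}^{p}$ are defined consistently with those appearing in $\E(\alpha)$, so no mismatch arises between the width used in the elimination rule and the one controlled by the good event. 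This bookkeeping — matching active-set membership to the scope of the concentration guarantee — is the only real obstacle; the inequality chain itself is routine once the memberships are secured.
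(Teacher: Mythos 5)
Your argument is correct and is essentially the same as the paper's (which defers to the appendix of the cited reference): you apply the concentration bound from $\E^{c}(\alpha)$ to both $a_{i}^{\star}$ and the empirical leader $\hat{a}_{i}^{p}$, combine with $r_{a_{i}^{\star},i}\geqslant r_{\hat{a}_{i}^{p},i}$, and read off the retention condition $\hat{r}_{a_{i}^{\star},i}^{p}+u_{a_{i}^{\star},i}^{p}\geqslant \hat{r}_{\hat{a}_{i}^{p},i}^{p}-u_{\hat{a}_{i}^{p},i}^{p}$. The explicit induction on $p$ to justify that both arms lie in $\A_{i}^{p-1}$ (so the concentration event applies to them) is a sensible piece of bookkeeping that the paper leaves implicit.
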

For proof of Lemma~\ref{LT3}, we refer to the Appendix in \cite{huang2021federated}.
We now present the regret bound for the good events below.
\begin{lemma} \label{LT4}
If $\E^{c}\left( \alpha\right)$ occurs, the regret of Fed-PECD in phase $p$ is upper bounded by $\frac{4 \sqrt{10} \alpha L}{\l} \sqrt{d K M} \frac{f^{p} + K}{\sqrt{f^{p-1}}}$. 
\end{lemma}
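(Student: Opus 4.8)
\textbf{Proof proposal for Lemma~\ref{LT4}.}

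The plan is to decompose the phase-$p$ regret into a sum of per-agent, per-arm instantaneous regrets, bound each instantaneous regret by a confidence-width quantity using the good-event and non-elimination guarantees, and then control the total width through the multi-agent G-optimal design that produced the allocation $\pi^p$. First I would write the regret contributed in phase $p$ as $\sum_{i \in [M]} \sum_{a \in \A_i^p} (f_{a,i}^p + \text{const}) \big(\langle \theta_{a_i^\star}, \phi_{a_i^\star, c_i}\rangle - \langle \theta_a, \phi_{a, c_i}\rangle\big)$, where each surviving arm is pulled $f_{a,i}^p = \lceil \pi_{a,i}^p f^p \rceil$ times and the length of the phase is $f^p + K$. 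By Lemma~\ref{LT3}, on $\E^c(\alpha)$ the optimal arm $a_i^\star$ survives into $\A_i^p$, so I may compare against the estimated reward of $\hat{a}_i^p$; and by the elimination rule in Line~\ref{line:action-set}, any arm $a$ still active in $\A_i^p$ satisfies $\hat{r}_{a,i}^p + u_{a,i}^p \geqslant \hat{r}_{\hat{a}_i^p,i}^p - u_{\hat{a}_i^p,i}^p$.

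The key step is to convert this elimination inequality, together with the definition of $\E^c(\alpha)$, into a bound on the true gap of a surviving arm. On the good event $|\hat{r}_{a,i}^p - r_{a,i}| < u_{a,i}^p = \alpha \sigma_{a,i}^p$ for every active arm, so a standard two-sided argument shows that any $a \in \A_i^p$ has a true suboptimality gap bounded by a constant multiple of the confidence widths at the \emph{previous} phase, i.e.\ by something of order $\alpha\,(\sigma_{a,i}^{p} + \sigma_{\hat a_i^p,i}^{p})$. Substituting $\sigma_{a,i}^p = \frac{\sqrt{10}}{\l}\|\psi_{a,\mu_i}\|_{V_a^p}$ converts the gap into the weighted norm $\|\psi_{a,\mu_i}\|_{V_a^p}$. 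The per-phase regret is then at most $\frac{C\alpha\sqrt{10}\,L}{\l}\sum_{i,a}(f_{a,i}^p+K)\,\|\psi_{a,\mu_i}\|_{V_a^p}$ for a small constant $C$ (the factor $L$ entering through Assumption~1 bounding $\|\psi_{a,\mu_i}\| \leqslant L$).

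The crux — and the part I expect to be the main obstacle — is bounding $\sum_{a \in \A_i^p}\|\psi_{a,\mu_i}\|_{V_a^p}$ uniformly via the G-optimal design. Here $V_a^{p}$ is the (pseudo-inverse) information matrix accumulated from the allocation $f_{a,i}^{p-1}=\lceil \pi_{a,i}^{p-1} f^{p-1}\rceil$ of the \emph{preceding} phase, so $V_a^p \approx \big(f^{p-1}\sum_{i} \pi_{a,i}^{p-1}\,\hat e_{a,i}\hat e_{a,i}^\top\big)^{\dagger}$ up to the ceiling rounding. The G-optimality property of the multi-agent design guarantees that the maximal weighted norm $\max_{a,i}\|\psi_{a,\mu_i}\|^2_{V_a^p}$ is controlled by $d/f^{p-1}$ (the Kiefer–Wolfowitz guarantee, that the optimal design value equals the dimension $d$), which produces the $\sqrt{d}$ factor and the crucial $1/\sqrt{f^{p-1}}$ scaling. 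I would then invoke Cauchy–Schwarz over the $KM$ active arm-agent pairs to turn $\sum_{i,a}\|\psi_{a,\mu_i}\|_{V_a^p}$ into a $\sqrt{KM}$ factor times the per-term $\sqrt{d/f^{p-1}}$ bound, and finally absorb the phase-length factor $f^p + K$ from the number of pulls. Collecting constants yields precisely $\frac{4\sqrt{10}\,\alpha L}{\l}\sqrt{dKM}\,\frac{f^p + K}{\sqrt{f^{p-1}}}$; the care needed is in handling the pseudo-inverse and the ceiling rounding in $f_{a,i}^{p-1}$ so that the design guarantee transfers cleanly to the transformed feature vectors $\psi_{a,\mu_i}$ rather than the original $\phi_{a,c_i}$.
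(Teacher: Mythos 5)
Your proposal follows essentially the same route as the paper's proof: the good event together with the elimination rule and Lemma~\ref{LT3} gives the $\Delta_{a,i}\leqslant 4\max_{a\in\A_i^p}u_{a,i}^p$ gap bound, Cauchy--Schwarz over the $M$ agents yields the $\sqrt{M}$ factor, and the multi-agent G-optimal design guarantee combined with the rounding $\lceil \pi_{a,j} f^{p-1}\rceil \geqslant \pi_{a,j} f^{p-1}$ delivers the $\sqrt{dK}/\sqrt{f^{p-1}}$ factor and the phase length $f^p+K$ supplies the remaining multiplier. The one bookkeeping caveat is that in the paper the $\sqrt{K}$ arises from the design objective $\sum_a d_a^p\leqslant dK$ inside the single Cauchy--Schwarz over agents, not from a Cauchy--Schwarz over all $KM$ arm--agent pairs with a per-term $\sqrt{d/f^{p-1}}$ bound (which, applied literally as you describe, would overcount to order $KM$ rather than $\sqrt{KM}$); with that adjustment your sketch matches the paper's argument.
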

For proof of Lemma~\ref{LT4} we refer to Section~\ref{app-2} in the Appendix.
Using the above lemmas, we now bound the regret below.

{\em Proof of Theorem~\ref{Thm}:}
Recall the superscript $p$ of $n$ is the exponent. Then, with probability at least $1 - \delta$ and $n > 1$, the total regret over the $H$ phases can be bounded as
\begin{align}
&\scalemath{0.9}{R\left( T\right) = \sum_{p=1}^{H} R_p \leqslant \sum_{p=1}^{H} \frac{4\sqrt{10} \alpha L}{\l} \sqrt{d K M} \frac{\fp + K}{\sqrt{f_{p-1}}}} \label{eq:label6_0} \\
&\scalemath{0.9}{= \sum_{p=1}^{H} \frac{4\sqrt{10} \alpha L}{\l} \sqrt{d K M} \left( \sqrt{c} n^{\frac{p+1}{2}} + K \frac{n^{-\frac{p-1}{2}}}{\sqrt{c}}\right)} \label{eq:label6_1} \\
&\scalemath{0.9}{= \frac{4\sqrt{10} \alpha L}{\l} \sqrt{d K M} [\sqrt{c} \left( \frac{1 - \sqrt{n}^{H + 1}}{1 - \sqrt{n}} - 1\right) \sqrt{n}}\nonumber\\
&\scalemath{0.9}{\hspace*{2 cm} + K \frac{1}{\sqrt{c} \left( \frac{1 - \sqrt{n}^{H + 1}}{1 - \sqrt{n}} - 1\right) \frac{1}{\sqrt{n}}}] }\label{eq:label6_2}\\
&\scalemath{0.9}{= \frac{4\sqrt{10} \alpha L}{\l} \sqrt{d K M} \left( \sqrt{cn} \frac{n^{\frac{H + 1}{2}} - \sqrt{n}}{\sqrt{n} - 1} + \frac{K}{\sqrt{c} \frac{\sqrt{n}^{H} - 1}{\sqrt{n} - 1}}\right)} \nonumber \\
&\scalemath{0.9}{= \frac{4\sqrt{10} \alpha L}{\l} \sqrt{d K M} \left( \sqrt{cn} \frac{n^{\frac{H + 1}{2}} - \sqrt{n}}{\sqrt{n} - 1} + \frac{K}{\sqrt{cn} - \sqrt{c}}\right)} \nonumber \\
&\scalemath{0.9}{= \frac{4\sqrt{10} \alpha L}{\l} \sqrt{d K M} \left( \frac{\sqrt{n} \sqrt{n - 1}}{\sqrt{n} - 1} \sqrt{cn \frac{n^{H} - 1}{n - 1}} + \frac{K}{\sqrt{cn} - \sqrt{c}}\right) \hspace*{-3 mm}}\label{eq:label6_3}\\
&\scalemath{0.9}{= \frac{4\sqrt{10} \alpha L}{\l} \sqrt{d K M} \left( \frac{\sqrt{n^{2} - n}}{\sqrt{n} - 1} \sqrt{T} + \frac{K}{\sqrt{cn} - \sqrt{c}}\right).}\label{eq:label6_4} 
\end{align}
Eq.~\eqref{eq:label6_0} follows from Lemma~\ref{LT4}.
Eq.~\eqref{eq:label6_1} follows from $\fp = cn^{p}$.
Eq.~\eqref{eq:label6_2} follows from $\sum_{n = 0}^{N - 1} r^{n} = \frac{1 - r^{N}}{1 - r}$.
Eq.~\eqref{eq:label6_3} follows from $n^{\frac{H}{2}} - 1 \leqslant \sqrt{n^{H} - 1}$.
Eq.~\eqref{eq:label6_4} follows from $T = \sum_{p-1}^{H} \fp + K H \geqslant \sum_{p-1}^{H} c n^{p} = c n \frac{n^{H} - 1}{n - 1}$.

Since
$
\alpha = O\left( \sqrt{\log \left( K\left( \log T\right) / \delta\right) + \min \{d, \log M\}}\right),
$
the regret scales in
$$
O\left( \frac{L}{\l} \sqrt{d K M( \log( K( \log T) / \delta) + \min \{ d, \log M\}}\left( \sqrt{T} + K\right)\right).
$$
When $K = O\left( \sqrt{T}\right)$, the cumulative regret scales as
$$
O\left( \frac{L}{\l} \sqrt{d K M T( \log ( K( \log T) / \delta) + \min \{ d, \log M\}}\right) .
$$

We define communication cost as the number of scalars communicated between the server and the agents. The communication cost  bound follows from Theorem~1 in \cite{huang2021federated} and is omitted here in the interest of space.
\qed

\section{Numerical Experiments}\label{sec:simulation}
\begin{figure*}[t]
\centering
\subcaptionbox{\small \label{fig:3}}{\includegraphics[width=0.35\textwidth, height=0.25\textwidth]{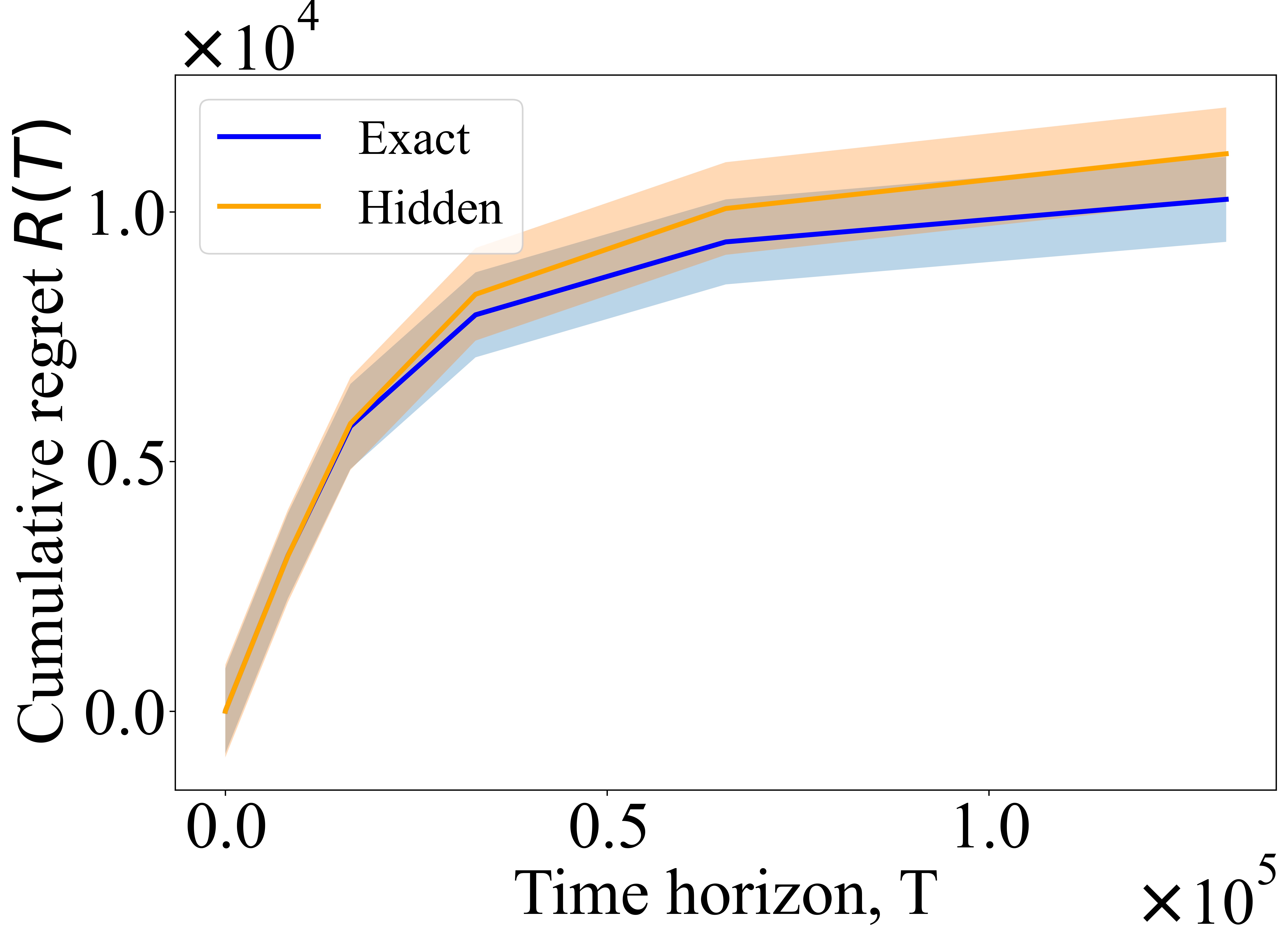}}\hspace{10 mm}
\subcaptionbox{\small \label{fig:4}}{\includegraphics[width=0.35\textwidth, height=0.25\textwidth]{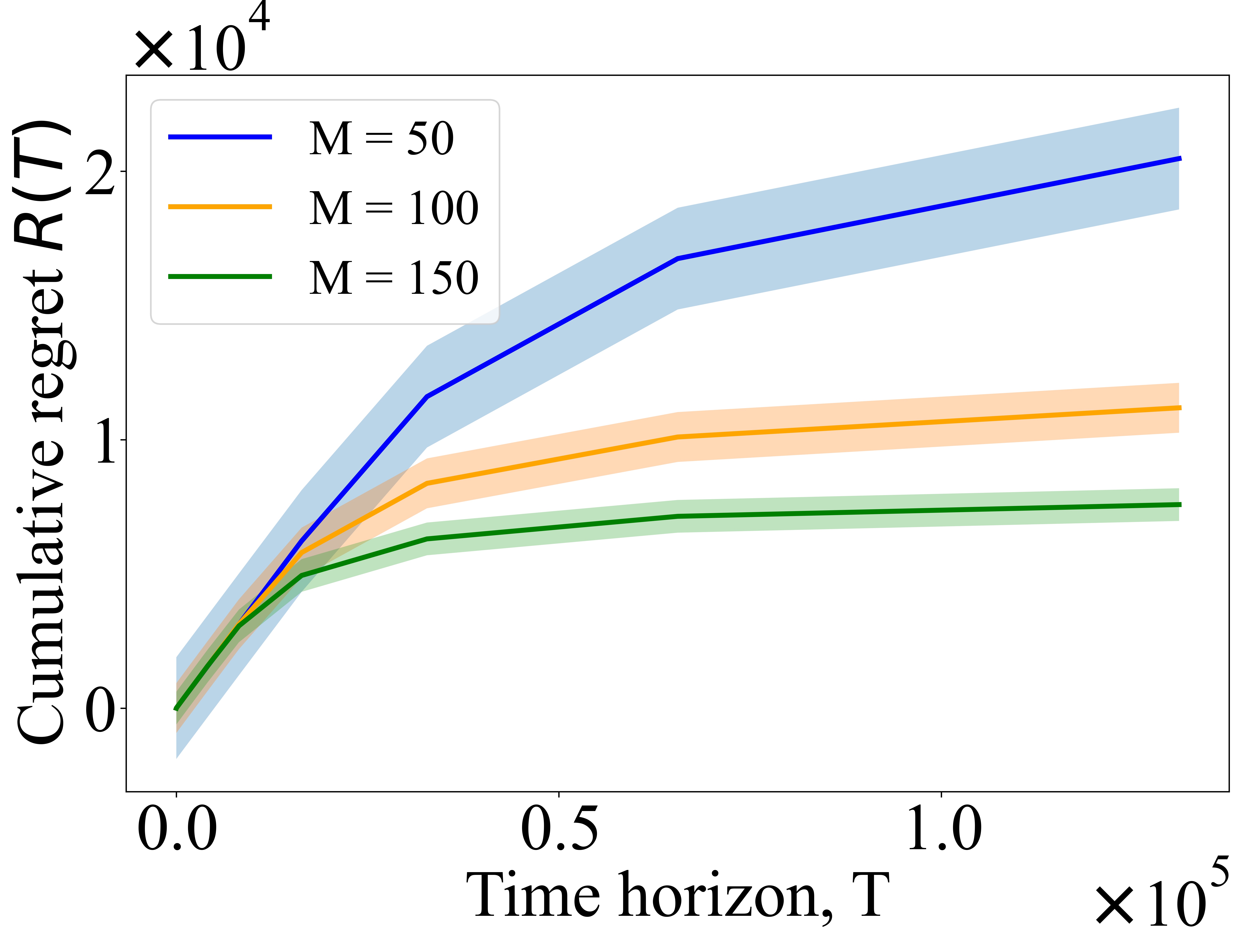}}\hspace{2 mm}

\subcaptionbox{\small \label{fig:5}} {\includegraphics[width=0.35\textwidth, height=0.25\textwidth]{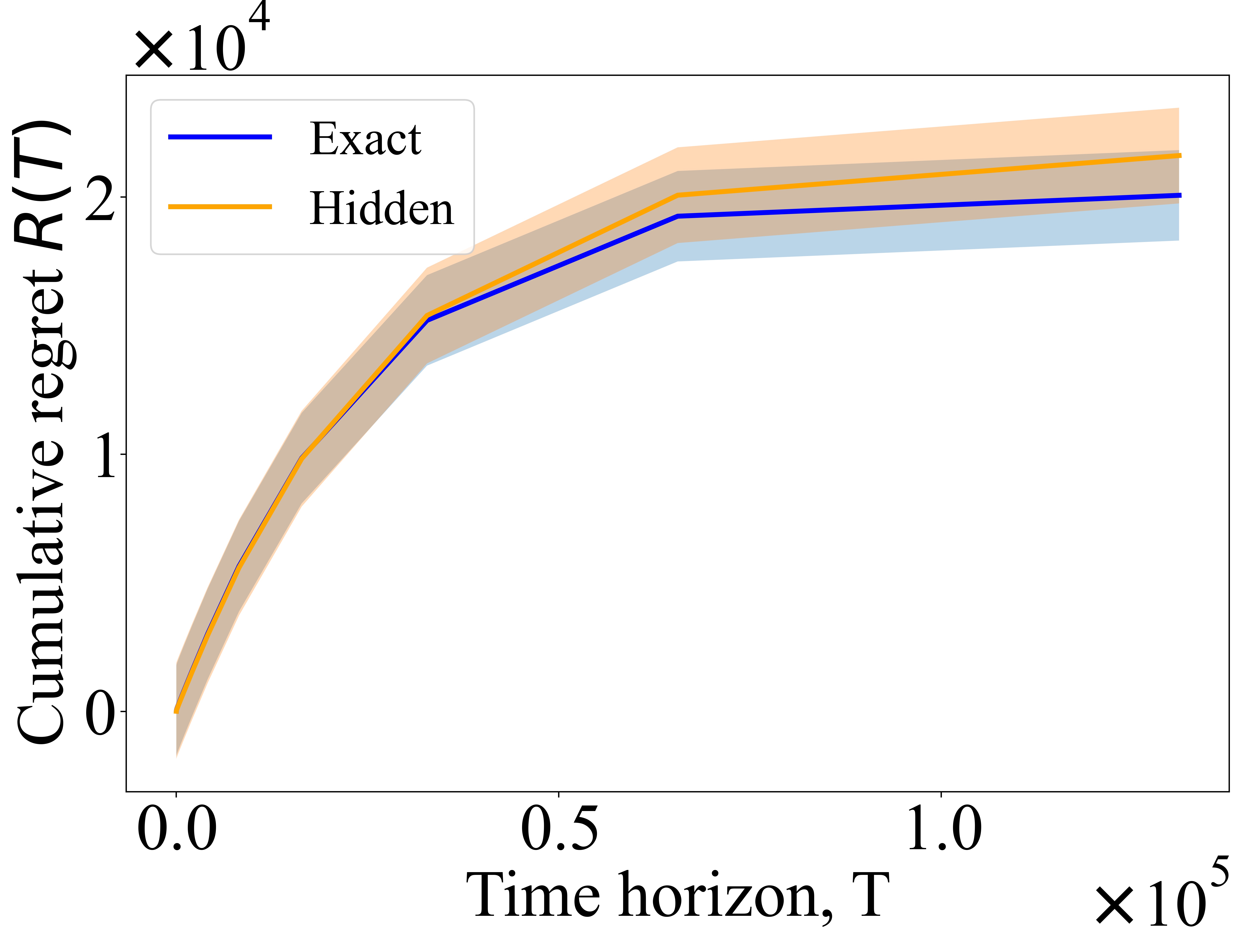}}\hspace{10 mm}
\subcaptionbox{\small \label{fig:6}}{\includegraphics[width=0.35\textwidth, height=0.25\textwidth]{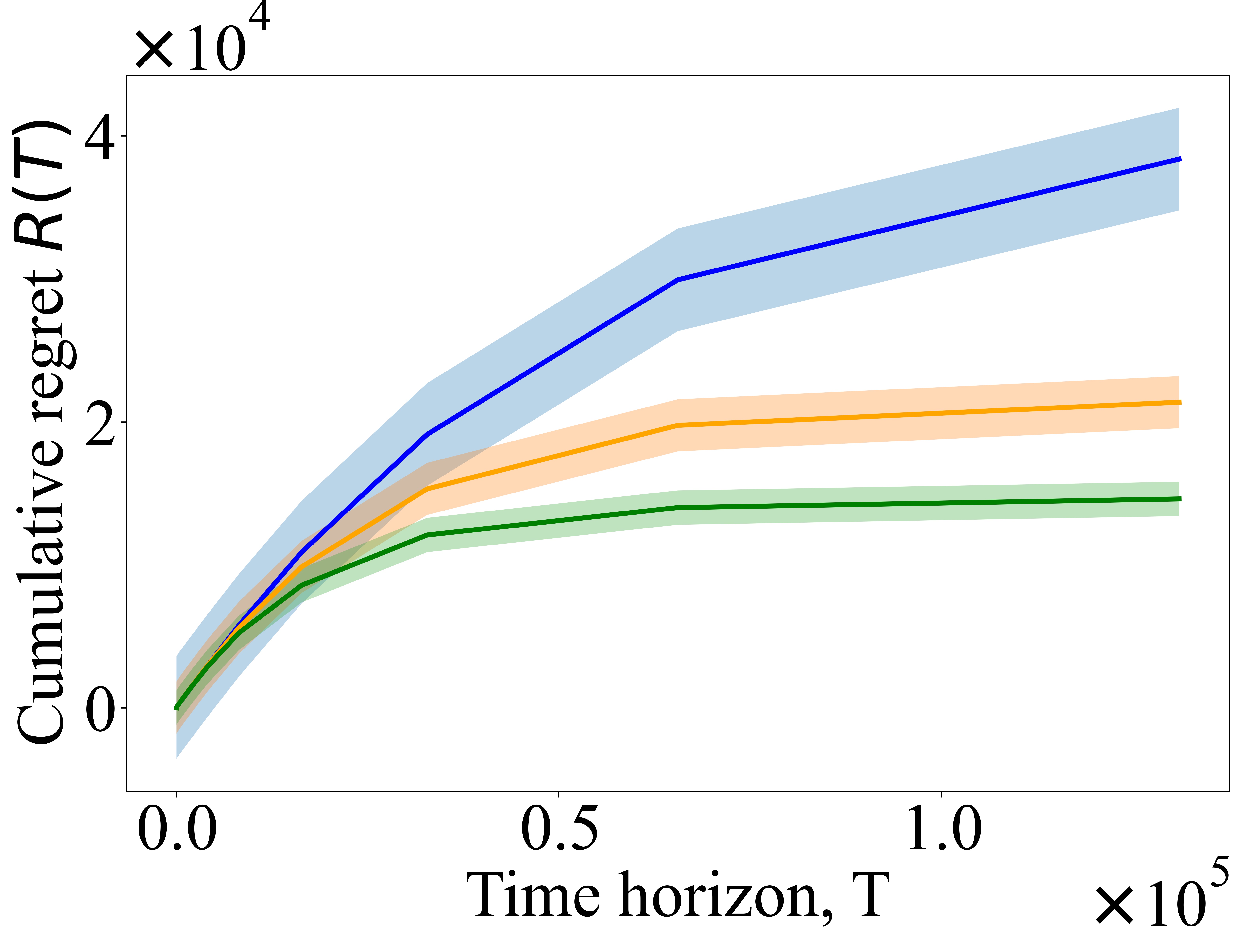}}
\vspace*{-2mm}
\caption{\small Per-agent (average) cumulative regret $R(T)$ versus time $T$. We compared the performance of our algorithm with the Fed-PE algorithm in \cite{huang2021federated} (the variant in which the actual context is observable, i.e., exact). We performed the experiments for both the synthetic data and the movielens data. {\em Synthetic data:} Figure~\ref{fig:3}  for two different variants, exact and hidden and Figure~\ref{fig:4} for different number of agents,  $M=50, 100, 150$.
{\em Movielens data:} Figure~\ref{fig:5} presents the plot for two different variants, exact and hidden, and Figure~\ref{fig:6} presents the plot for different numbers of agents,  $M=50, 100, 150$. As expected, {\em exact} outperforms the hidden setting. The figures also show that the per-agent regret decreases as the number of agents increases, validating the benefit of collaborative learning.}\label{fig:Syn1}
\vspace*{-3 mm}
\end{figure*}
We validated the performance of our algorithm using both synthetic and real-world datasets. We considered two different settings, (i) {\em exact} in which the actual feature vector $c_i$ is known to all the agents and (ii) {\em hidden} in which the feature vector is unknown and the agents only observe a distribution $\mu_i$. We compared the performance of our algorithm by varying the number of agents to see the effect of agents on the learning process. All the experiments were conducted using Python. We set $\delta = 0.1$, $T = 2^{17}$, $\fp = 2^p$, $p \in \{1, 2, \cdots, 17 \}$. Thus, the length of phase $p$ is $2^p+K$. To compare the performance for different number of agents, we set $M = \{50, 100, 150 \}$.

\textbf{Synthetic data:} In this dataset we set $K = 10$ and $d = 3$ and the feature vectors $\phi_{a, c}$  were generated randomly. From $\phi_{a, c}$, we constructed  $\psi_{a, \mu}$ by adding an observation noise. The suboptimal reward gap and $\| \phi_{a, c} \|_2$  of the data lie in $[0.2, 0.4]$ and $[0.5, 1]$, respectively. We set $\theta_a = [1, 0, 0]$.
 We present the plots showing variations of the cumulative regret with respect to time for different settings and for different numbers of agents in Figure~\ref{fig:Syn1}. Each point in the plot was averaged over 100 independent trials.  Our experimental results given in Figure~\ref{fig:3} show that the exact setting outperforms the hidden setting, as expected, since the agents observe the actual contexts. We varied the number of agents as $M = 50,100,150$ and compared the per-agent cumulative regret as shown in Figure~\ref{fig:4}. It demonstrates that as the number of agents increases, the per-agent (average) cumulative regret decreases. This is expected since when more agents work collaboratively, each agent receives more information at the end of each phase, thus accelerating the learning process. We ran for $2^{17}$ time-period. 

\textbf{Movielens data:} We used Movielens-100K data \cite{harper2015movielens} to evaluate the performance of our algorithm. We first get the rating matrix $r \in \mathbb{R}^{943 \times 1682}$ by using a non-negative matrix factorization $r = WH$, where $W \in \mathbb{R}^{943 \times 3}$, $H \in \mathbb{R}^{3 \times 1682}$. We used the k-mean clustering algorithm with $k=30$ on $H$ to cluster the action set into $30$ clusters. Thus, the number of actions $K=30$ and $\theta_a$, for $a \in \A$, is the center of the respective cluster. We set $M = 100$ by randomly selecting 100 users from the dataset. For this experiment, we noticed that the suboptimal reward gap for the data lies in $[0.01, 0.8]$ and $\| \phi_{a, c} \|_2^2$ lies in $[0.4, 0.8]$. We present the plots showing the variation of the cumulative regret with respect to time for the movielens data for different settings and for different numbers of agents in Figure~\ref{fig:Syn1}. The reward $r(a, c_i)$ is 
 bounded above by $1$, and the observation noise $\eta_i$ is set as Gaussian with zero mean and standard deviation $10^{-3}$. In this experiment, as expected, the variant that does not observe the context (hidden) is outperformed 
by the variant that uses the context observation (exact) for the estimation, as shown in Figure~\ref{fig:5}. We varied the number of agents as $M = 50,100,150$ and ran for a time period of $2^{17}$ and the plots are shown in Figure~\ref{fig:6}. The plots show that the per-agent regret decreases as the number of agents increases, validating the benefit of collaborative learning.

\vspace{-2 mm}
\section{Conclusion}\label{sec:con}
In this work, we studied a distributed and federated contextual MAB problem with unknown contexts where $M$ agents face different bandit problems and the agents' goal is to minimize the total cumulative regret. We considered a setting where the exact contexts are hidden and unobservable to the agents. In our model, each agent shares the local estimates with the central server and receives the aggregated global estimates from the server, and based on this information, each agent updates their local model.  We proposed an elimination-based algorithm, Fed-PECD, and proved the regret and communication bounds for the linearly parametrized reward function. We evaluated the performance of our algorithm using synthetic and movielens data and compared it with a baseline approach.  

\newpage

\bibliographystyle{myIEEEtran}
\bibliography{Bandits}

\appendix
\renewcommand\thetheorem{\Alph{section}.\arabic{theorem}}

\subsection{Preliminaries}
To prove Theorem~\ref{Thm}, we first present the following supporting lemmas. 
\begin{lemma} \label{LT0}
Let $\zeta \in \mathbb{R}^n$ be an 1-sub-Gaussian random vector conditioned on $\F_{p-1}$ and $A \in \mathbb{R}^{n \times n}$ be an $\F_{p-1}$-measurable matrix. Let $\lambda>0$ and $det \left( I_n-2 \lambda A A^{\top} \right) > 0$. Then, we have
$$
\mathbb{E}\left[e^{\lambda\|A \zeta\|^2} \mid \F_{p-1}\right] \leq \sqrt{\frac{1}{det \left( I_n-2 \lambda A A^{\top}\right)}}.
$$
\end{lemma}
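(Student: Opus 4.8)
The plan is to linearize the quadratic form $\|A\zeta\|^2$ by introducing an auxiliary standard Gaussian vector, thereby reducing the statement to the \emph{linear} sub-Gaussian control that $\zeta$ is assumed to satisfy. Throughout I take the defining property of a $1$-sub-Gaussian vector $\zeta$ (conditioned on $\F_{p-1}$) to be $\mathbb{E}[e^{\langle v, \zeta\rangle}\mid \F_{p-1}] \leq e^{\frac12\|v\|^2}$ for every $\F_{p-1}$-measurable $v \in \bR^n$, the natural vector analogue of Assumption~\ref{assume:noise}.

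First I would rewrite the exponent. Writing $\lambda\|A\zeta\|^2 = \tfrac12\|\sqrt{2\lambda}\,A\zeta\|^2$ and invoking the elementary Gaussian identity $e^{\frac12\|w\|^2} = \mathbb{E}_g[e^{\langle g, w\rangle}]$, valid for any fixed $w\in\bR^n$ with $g\sim N(0,I_n)$ drawn independently of $\zeta$, gives
$$
e^{\lambda\|A\zeta\|^2} = \mathbb{E}_g\big[e^{\sqrt{2\lambda}\,\langle g, A\zeta\rangle}\big] = \mathbb{E}_g\big[e^{\sqrt{2\lambda}\,\langle A^\top g,\ \zeta\rangle}\big].
$$
Taking $\mathbb{E}[\,\cdot\mid \F_{p-1}]$ and exchanging the two expectations (justified by Tonelli's theorem, the integrand being nonnegative) yields
$$
\mathbb{E}\big[e^{\lambda\|A\zeta\|^2}\mid \F_{p-1}\big] = \mathbb{E}_g\big[\,\mathbb{E}[e^{\langle \sqrt{2\lambda}\,A^\top g,\ \zeta\rangle}\mid \F_{p-1}]\,\big].
$$
Conditionally on $g$ (which is independent of $\zeta$) the vector $v=\sqrt{2\lambda}\,A^\top g$ is fixed, so the sub-Gaussian property applied with this $v$ bounds the inner expectation by $e^{\frac12\|\sqrt{2\lambda}\,A^\top g\|^2} = e^{\lambda\, g^\top A A^\top g}$.

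It then remains to evaluate the Gaussian integral $\mathbb{E}_g[e^{\lambda g^\top A A^\top g}]$. Setting $M = 2\lambda A A^\top$ (symmetric, positive semidefinite since $\lambda>0$), this is the moment generating function of a Gaussian quadratic form, and the standard computation, completing the square inside the $N(0,I_n)$ density, gives $\mathbb{E}_g[e^{\frac12 g^\top M g}] = \det(I_n - M)^{-1/2}$ whenever $I_n - M \succ 0$. Substituting $M = 2\lambda A A^\top$ produces exactly $\det(I_n - 2\lambda A A^\top)^{-1/2}$, the claimed bound.

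The main obstacle, and the point that needs care, is the determinant hypothesis: the Gaussian integral converges and the closed form holds precisely when $I_n - 2\lambda A A^\top \succ 0$, not merely when its determinant is positive. I would argue that in the regime of interest this is automatic: since $A A^\top \succeq 0$ and $\lambda>0$, the eigenvalues of $I_n - 2\lambda A A^\top$ are $1-2\lambda\mu_j$ with $\mu_j\geq 0$, which decrease monotonically from $1$ as $\lambda$ grows from $0$; hence on the connected range of $\lambda$ (starting at $0$) on which $\det(I_n-2\lambda A A^\top)>0$, the matrix is positive definite and the computation is valid, so the determinant condition correctly encodes the positive-definite regime in which the lemma is invoked. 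The remaining steps, the Gaussian identity and the Tonelli exchange, are then routine.
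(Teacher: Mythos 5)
Your argument is correct and is essentially the same Gaussian-decoupling proof the paper relies on (Lemma~4 of the cited reference, also sketched in comments in the source): introduce an independent $N(0,2\lambda I_n)$ vector, evaluate $\mathbb{E}[e^{\langle x, A\zeta\rangle}]$ once by integrating out $x$ to recover $e^{\lambda\|A\zeta\|^2}$ and once by applying the sub-Gaussian bound to $\zeta$, then compute the resulting Gaussian quadratic-form integral. Your observation that the hypothesis should really be $I_n-2\lambda AA^{\top}\succ 0$ rather than merely $\det(I_n-2\lambda AA^{\top})>0$ is a fair and correct refinement, and as you note it is harmless where the lemma is invoked (there $A$ is a symmetric idempotent and $\lambda\in(0,1/2)$).
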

\begin{proof}
For proof of this lemma, see Lemma~4 in \cite{huang2021federated} (supplementary material).
\end{proof}
Next, we show that $\hat{r}_{a, i}^{p} - r_{a, i}$ is a conditionally sub-Gaussian random variable for all phase $p \in [H]$, any agent $i \in [M]$, and arm $a \in \A_{i}^{p-1}$. 

\begin{lemma} \label{LT1}
At phase $p \in [H]$, for any agent $i \in [M]$, and arm $a \in \A_{i}^{p-1}, \hat{r}_{a, i}^{p} - r_{a, i}$ is a conditionally sub-Gaussian random variable, i.e., $\mathbb{E}\left[\exp \left( \lambda \left( \hat{r}_{a, i}^{p} - r_{a, i}\right) \right) \mid \F_{p-1}\right] \leqslant \exp \left( \frac{\lambda^{2} \left( \sigma_{a, i}^{p} \right)^{2}}{2}\right)$, for any $\lambda \in \mathbb{R}$, where $\sigma_{a, i}^{p}:= \frac{\sqrt{10}}{\ell} \left\|\psi_{a, \mu_i}\right\|_{V_{a}^{p}}$.
\end{lemma}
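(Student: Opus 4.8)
The plan is to write $\hat r_{a,i}^{p}-r_{a,i}$ as an explicit, $\F_{p-1}$-measurable linear combination of the conditionally independent, conditionally sub-Gaussian per-agent reward noises from phase $p-1$, and then to read off the sub-Gaussian parameter through a variance-style computation that collapses to $\|\psi_{a,\mu_i}\|_{V_a^{p}}$.

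First I would record the structural fact that every local estimate is rank-one along its own feature direction: agent $i'$ forms $\hat\theta_{a,i'}^{p-1}=\bar y_{a,i'}^{p-1}\,\psi_{a,\mu_{i'}}/\|\psi_{a,\mu_{i'}}\|^{2}$ (the phase analogue of the initialization rule), so the normalized outer products $\hat\theta_{a,i'}^{p-1}(\hat\theta_{a,i'}^{p-1})^{\top}/\|\hat\theta_{a,i'}^{p-1}\|^{2}$ equal $\psi_{a,\mu_{i'}}\psi_{a,\mu_{i'}}^{\top}/\|\psi_{a,\mu_{i'}}\|^{2}$, independent of the realized rewards. Hence $V_a^{p}=(G_a^{p})^{\dagger}$ with $G_a^{p}:=\sum_{i'\in\R_a^{p-1}}f_{a,i'}^{p-1}\,\psi_{a,\mu_{i'}}\psi_{a,\mu_{i'}}^{\top}/\|\psi_{a,\mu_{i'}}\|^{2}$ depends only on the feature directions and the design counts $f_{a,i'}^{p-1}$ (fixed by active sets formed before phase $p-1$), so $V_a^{p}$ is $\F_{p-1}$-measurable while the phase-$(p-1)$ noises stay random. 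Writing $\xi_{i'}:=\bar y_{a,i'}^{p-1}-r_{a,i'}$, using $\hat r_{a,i}^{p}=\psi_{a,\mu_i}^{\top}\hat\theta_a^{p}$ with $\hat\theta_a^{p}=V_a^{p}\sum_{i'}f_{a,i'}^{p-1}\hat\theta_{a,i'}^{p-1}$, and noting $\psi_{a,\mu_i}\in\mathrm{range}(G_a^{p})$ (which holds since $a\in\A_i^{p-1}\Rightarrow i\in\R_a^{p-1}$, so $\mathbb{E}[\hat r_{a,i}^{p}\mid\F_{p-1}]=r_{a,i}$), a direct substitution gives
$$
\hat r_{a,i}^{p}-r_{a,i}=\sum_{i'\in\R_a^{p-1}}w_{i'}\,\xi_{i'},\qquad w_{i'}=f_{a,i'}^{p-1}\,\frac{\psi_{a,\mu_i}^{\top}V_a^{p}\psi_{a,\mu_{i'}}}{\|\psi_{a,\mu_{i'}}\|^{2}}.
$$

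Next I would establish conditional sub-Gaussianity of each $\xi_{i'}$. Each per-round effective observation $y_{a,i',t}-r_{a,i'}$ splits into the $1$-sub-Gaussian additive noise $\eta_{i',t}$ (Assumption~\ref{assume:noise}) plus the bounded context-induced fluctuation $\langle\theta_a,\phi_{a,c_{i',t}}-\psi_{a,\mu_{i'}}\rangle$ (bounded via Assumption~1); combining these as in the single-agent hidden-context transformation of \cite{kirschner2019stochastic,lin2022distributed_1} makes each per-round term $\sqrt{10}$-sub-Gaussian, so the average over $f_{a,i'}^{p-1}$ independent rounds makes $\xi_{i'}$ conditionally $\sqrt{10}/\sqrt{f_{a,i'}^{p-1}}$-sub-Gaussian. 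Since the $w_{i'}$ are $\F_{p-1}$-measurable and the $\xi_{i'}$ are conditionally independent across agents, the conditional MGF factorizes, yielding $\mathbb{E}[\exp(\lambda(\hat r_{a,i}^{p}-r_{a,i}))\mid\F_{p-1}]\le\exp(\tfrac{\lambda^{2}}{2}\cdot 10\sum_{i'}w_{i'}^{2}/f_{a,i'}^{p-1})$. The crux is then to show $10\sum_{i'}w_{i'}^{2}/f_{a,i'}^{p-1}\le(\sigma_{a,i}^{p})^{2}$: substituting $w_{i'}$ and bounding $\|\psi_{a,\mu_{i'}}\|\ge\l$ reduces the sum to $\tfrac{10}{\l^{2}}(V_a^{p}\psi_{a,\mu_i})^{\top}G_a^{p}(V_a^{p}\psi_{a,\mu_i})$, and the pseudoinverse identity $G_a^{p\,\dagger}G_a^{p}G_a^{p\,\dagger}=G_a^{p\,\dagger}$ (valid as $G_a^{p}$ is symmetric PSD) collapses it to $\tfrac{10}{\l^{2}}\psi_{a,\mu_i}^{\top}V_a^{p}\psi_{a,\mu_i}=\tfrac{10}{\l^{2}}\|\psi_{a,\mu_i}\|_{V_a^{p}}^{2}=(\sigma_{a,i}^{p})^{2}$, exactly the claimed parameter.

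I expect the main obstacle to be the filtration bookkeeping: one must argue that $V_a^{p}$ and the counts $f_{a,i'}^{p-1}$ are $\F_{p-1}$-measurable (hence treatable as constants) while the phase-$(p-1)$ noises entering $\hat\theta_{a,i'}^{p-1}$ remain conditionally independent and sub-Gaussian, and this hinges entirely on the rank-one observation that strips all reward-dependence from $V_a^{p}$. The quadratic-form collapse and the $\l$-lower bound are then routine. I would also want to verify the provenance of the constant $\sqrt{10}$, since it is inherited from the hidden-context reward transformation rather than produced in this argument.
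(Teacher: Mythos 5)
Your linear-algebra skeleton matches the paper's: both arguments exploit the rank-one form of the local estimators so that $V_a^p$ is an $\F_{p-1}$-measurable function of the directions $\psi_{a,\mu_{i'}}$ and the counts $f_{a,i'}^{p-1}$ alone, both write $\hat r_{a,i}^{p}-r_{a,i}$ as an $\F_{p-1}$-measurable combination of the phase-$(p-1)$ per-agent noises, and both collapse the resulting quadratic form via $\|\psi_{a,\mu_{i'}}\|\geqslant \l$ and the pseudo-inverse identity to $\frac{10}{\l^{2}}\|\psi_{a,\mu_i}\|_{V_a^{p}}^{2}$. The gap is exactly in the step you flag at the end: the hidden-context term. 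Your identity $\hat r_{a,i}^{p}-r_{a,i}=\sum_{i'}w_{i'}\xi_{i'}$ with $\xi_{i'}=\bar y_{a,i'}^{p-1}-r_{a,i'}$, and the attendant claim $\mathbb{E}[\hat r_{a,i}^{p}\mid\F_{p-1}]=r_{a,i}$, do not hold in this paper's model: here $r_{a,i'}=\phi_{a,c_{i'}}^{\top}\theta_a$ with $c_{i'}$ a \emph{fixed} but unobserved profile, so after substitution there remains the deterministic residual $\sum_{i'}w_{i'}\left(\phi_{a,c_{i'}}-\psi_{a,\mu_{i'}}\right)^{\top}\theta_a+\left(\psi_{a,\mu_i}-\phi_{a,c_i}\right)^{\top}\theta_a$, which neither vanishes nor averages down over rounds. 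Your notation $\phi_{a,c_{i',t}}$ silently assumes the context is redrawn i.i.d.\ at every pull (the Kirschner--Krause single-agent model); under the fixed-context model a genuinely biased estimator cannot satisfy the two-sided bound $\mathbb{E}[e^{\lambda(\hat r-r)}\mid\F_{p-1}]\leqslant e^{\lambda^{2}\sigma^{2}/2}$ for all $\lambda\in\mathbb{R}$ without the bias being addressed.

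The paper's proof is precisely the mechanism that produces the $\sqrt{10}$ you could not account for: each context-induced offset is bounded by $|(\phi_{a,c_j}-\psi_{a,\mu_j})^{\top}\theta_a|\leqslant 2$ and folded into the noise, so that $\xi_{a,j}^{p-1}+2f_{a,j}^{p-1}$ is treated as $\sqrt{(1^{2}+2^{2})f_{a,j}^{p-1}}=\sqrt{5f_{a,j}^{p-1}}$-sub-Gaussian, and the remaining factor of $2$ comes from $(a+b)^{2}\leqslant 2a^{2}+2b^{2}$ used to separate the aggregated agents' offsets from agent $i$'s own term $(\psi_{a,\mu_i}-\phi_{a,c_i})^{\top}\theta_a$; that is where $10=2\times 5$ originates, and the leftover additive offset reappears in the proof of Lemma~\ref{LT2} as the $2\l/\sqrt{10L}$ correction rather than being claimed to have mean zero. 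Your route would yield a per-round constant of $\sqrt{5}$ (Hoeffding's lemma on the bounded fluctuation plus the independent $1$-sub-Gaussian $\eta$), and hence a tighter parameter than $\sigma_{a,i}^{p}$, but only under the resampled-context model; to prove the lemma as stated in this paper's setting you need to complete your argument with the bias-absorption step above.
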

\begin{proof}
Let $\xi_{a, i}^{p-1}$ be the sum of the independent sub-Gaussian noise incurred during the collaborative exploration step in phase $p-1$, i.e., $\xi_{a, i}^{p-1} := \sum_{t \in \T_{a, i}^{p-1}} \eta_{i, t}$. We know $f_{a, i}^{p-1}$ and $\xi_{a, i}^{p-1}$ are conditionally $\sqrt{f_{a, i}^{p-1}}$-sub-Gaussian random variables.
Recall the definition of local estimators, 
\begin{align}
\hat{\theta}_{a, i}^{p-1} &= \left(\frac{1}{f_{a, i}^{p-1}} \sum_{t \in \T_{a, i}^{p-1}} y_{i, t}\right) \frac{\psi_{a, \mu_i}}{\left\|\psi_{a, \mu_i}\right\|^{2}} \nonumber \\
&= \left(\phi_{a, c_i}^{\top} \theta_{a} + \frac{\xi_{a, i}^{p-1}}{f_{a, i}^{p-1}}\right) \frac{\psi_{a, \mu_i}}{\left\|\psi_{a, \mu_i}\right\|^{2}} \nonumber \\
&= \left(\psi_{a, \mu_i}^{\top} \theta_{a} + \frac{\xi_{a, i}^{p-1}}{f_{a, i}^{p-1}} + \phi_{a, c_i}^{\top} \theta_{a} - \psi_{a, \mu_i}^{\top} \theta_{a}\right) \frac{\psi_{a, \mu_i}}{\left\|\psi_{a, \mu_i}\right\|^{2}} \nonumber \\
&= \frac{\psi_{a, \mu_i} \psi_{a, \mu_i}^{\top}}{\left\|\psi_{a, \mu_i}\right\|^{2}} \theta_{a} + \frac{\psi_{a, \mu_i} \xi_{a, i}^{p-1}}{f_{a, i}^{p-1}\left\|\psi_{a, \mu_i}\right\|^{2}} + \frac{\psi_{a, \mu_i} (\phi_{a, c_i}^{\top} - \psi_{a, \mu_i}^{\top})}{\left\|\psi_{a, \mu_i}\right\|^{2}} \theta_{a}. \label{eq:label1_1}
\end{align}
We know $\hat{r}_{a, i}^p = \psi_{a, \mu_i}^{\top} \hat{\theta}_{a}^{p}$ and $V_{a}^{p} = \left( \sum_{j \in \R_{a}^{p-1}} f_{a, j}^{p-1} \frac{\psi_{a, \mu_j} \psi_{a, \mu_j}^{\top}}{\left\|\psi_{a, \mu_j}\right\|^{2}}\right)^{\dagger}$. Thus
\begin{align}
&\hat{r}_{a, i}^{p} - r_{a, i} = \psi_{a, \mu_i}^{\top} \hat{\theta}_{a}^{p} - \phi_{a, c_i}^{\top} \theta_{a} \nonumber \\
&= \psi_{a, \mu_i}^{\top} V_{a}^{p} (\sum_{j \in \R_{a}^{p-1}} f_{a, j}^{p-1} \hat{\theta}_{a, j}^{p-1} ) - \phi_{a, c_i}^{\top} \theta_{a} \nonumber 
\end{align}
\begin{align}
&\scalemath{0.9}{= \psi_{a, \mu_i}^{\top} V_{a}^{p} \bigg(\sum_{j \in \R_{a}^{p-1}} f_{a, j}^{p-1} \big(\frac{\psi_{a, \mu_j} \psi_{a, \mu_j}^{\top}}{\left\|\psi_{a, \mu_j}\right\|^{2}} \theta_{a}+ \frac{\psi_{a, \mu_j} \xi_{a, j}^{p-1}}{f_{a, j}^{p-1}\left\|\psi_{a, \mu_j}\right\|^{2}}} \nonumber\\
&\scalemath{0.9}{+ \frac{\psi_{a, \mu_j} (\phi_{a, c_j}^{\top} - \psi_{a, \mu_j}^{\top})}{\left\|\psi_{a, \mu_j}\right\|^{2}} \theta_{a}\big)\bigg) - \phi_{a, c_i}^{\top} \theta_{a}} \label{eq:label1_2} \\
&\scalemath{0.9}{\leqslant \psi_{a, \mu_i}^{\top} V_{a}^{p}\left(\sum_{j \in \R_{a}^{p-1}} \frac{e_{a, j}}{\left\|\psi_{a, \mu_j}\right\|} (\xi_{a, j}^{p-1} + 2f_{a, j}^{p-1})\right) + (\psi_{a, \mu_i}^{\top} - \phi_{a, c_i}^{\top}) \theta_{a}}\label{eq:label1_add0}\\
&\scalemath{0.9}{\leqslant \psi_{a, \mu_i}^{\top} V_{a}^{p} \left(\sum_{j \in \R_{a}^{p-1}} \frac{e_{a, j}}{\left\|\psi_{a, \mu_j}\right\|} (\xi_{a, j}^{p-1} + 2f_{a, j}^{p-1})\right) + 2} \label{eq:label1_4}
\end{align}
Eq.~\eqref{eq:label1_2} follows from Eq.~\eqref{eq:label1_1}. 
Eq.~\eqref{eq:label1_add0} follows from $\left( \phi_{a, c_j}^{\top} - \psi_{a, \mu_j}^{\top}\right) \theta_{a} \leqslant 2$,  $e_{a, j} = \frac{\psi_{a, j}}{\left\| \psi_{a, j}\right\|}$, and $V_{a}^{p} = \left( \sum_{j \in \R_{a}^{p-1}} f_{a, j}^{p-1} \frac{\psi_{a, \mu_j} \psi_{a, \mu_j}^{\top}}{\left\|\psi_{a, \mu_j}\right\|^{2}}\right)^{\dagger}$. Finally, 
Eq.~\eqref{eq:label1_4} follows from $\left( \phi_{a, c_i}^{\top} - \psi_{a, \mu_i}^{\top}\right) \theta_{a} \leqslant 2$. 

Given $f_{a, i}^{p-1}, \xi_{a, i}^{p-1}$ are conditionally $\sqrt{f_{a, i}^{p-1}}$-sub-Gaussian random variables. Thus $\left( \xi_{a, j}^{p-1} + 2f_{a, j}^{p-1}\right)$ is a conditionally $\sqrt{\left( 1^{2} + 2^{2}\right) f_{a, j}^{p-1}} = \sqrt{5f_{a, j}^{p-1}}$ sub-Gaussian random variable. Also Eq.~\eqref{eq:label1_add0} is a linear combination of $\left( \xi_{a, j}^{p-1} + 2f_{a, j}^{p-1}\right)$ and $\left( \psi_{a, \mu_i}^{\top} - \phi_{a, c_i}^{\top}\right) \theta_{a}$. Thus, given $\F_{p-1}$, $\hat{r}_{a, i}^{p}-r_{a, i}$ is a conditionally sub-Gaussian random variable, whose parameter can be bounded as
\begin{align}
&\scalemath{0.9}{\mathbb{E}_{c_{i} \sim \mu_{i}} [\hat{r}_{a, i}^{p} - r_{a, i}]^2}\nonumber\\
&\scalemath{0.9}{ \leqslant \mathbb{E}_{c_{i} \sim \mu_{i}} \hspace*{-1 mm}\left[\psi_{a, \mu_i}^{\top} V_{a}^{p}\left(\sum_{j \in \R_{a}^{p-1}}\hspace*{-1 mm} \frac{e_{a, j}}{\left\|\psi_{a, \mu_j}\right\|} (\xi_{a, j}^{p-1} \hspace*{-1 mm}+\hspace*{-0.5 mm} 2f_{a, j}^{p-1})\hspace*{-1 mm}\right)\hspace*{-0.5 mm} +\hspace*{-0.5 mm} (\psi_{a, \mu_i}^{\top}\hspace*{-0.5 mm} -\hspace*{-0.5 mm} \phi_{a, c_i}^{\top}) \theta_{a}\right]^2} \nonumber \\ 
&\scalemath{0.9}{\leqslant 2 \sum_{j \in \R_{a}^{p-1}} \left[\psi_{a, \mu_i}^{\top} V_{a}^{p} \frac{e_{a, j}}{\left\|\psi_{a, \mu_j}\right\|} ( \xi_{a, j}^{p-1} + 2f_{a, j}^{p-1} )\right]^{2}} \label{eq:label1_add2} \\ 
&\scalemath{0.9}{= 2 \sum_{j \in \R_{a}^{p-1}} \left(\psi_{a, \mu_i}^{\top} V_{a}^{p} \frac{e_{a, j}}{\left\|\psi_{a, \mu_j}\right\|}\right)^{2} \cdot 5 f_{a, j}^{p-1}} \nonumber \\ 
&\scalemath{0.9}{= \sum_{j \in \R_{a}^{p-1}} \psi_{a, \mu_i}^{\top} V_{a}^{p} f_{a, j}^{p-1} e_{a, j} e_{a, j}^{\top} V_{a}^{p} \psi_{a, \mu_i} \frac{10}{\left\|\psi_{a, \mu_j}\right\|^{2}}} \nonumber \\ 
&\scalemath{0.9}{ \leqslant \frac{10}{\ell^{2}} \psi_{a, \mu_i}^{\top} V_{a}^{p} \psi_{a, \mu_i}= (\sigma_{a, i}^{p})^{2}.} \label{eq:label1_5}
\end{align}

Eq.~\eqref{eq:label1_add2} follows from $\left( a+b\right)^2 \leqslant 2a^2 + 2b^2$ and uses the fact that $ \psi_{a, \mu_{i}}=\mathbb{E}_{c_{i} \sim \mu_{i}}\left[\phi_{a, c_{i}} \mid \F_{p-1}, \mu_{i}, c_i\right]$.
Eq.~\eqref{eq:label1_5} follows from $\left\|\psi_{a, \mu_j}\right\| \geqslant \ell$, $A\left( A\right)^{\dagger}A = A$ for all $i$, $a$.
\end{proof}

\begin{lemma} \label{LTadd1}
From the definition of $\sigma_{a, i}^{p}:= \frac{\sqrt{10}}{\ell} \left\|\psi_{a, \mu_i}\right\|_{V_{a}^{p}}$, where $V_a^p$ is a symmetric matrix, for all $a \in [K], i\in [M],$ and $p \in [H]$, we have $\frac{\l \sigma_{a, i}^{p}}{\sqrt{10 L}} \geqslant 1$. 
\end{lemma}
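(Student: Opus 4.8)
The plan is to strip the constants out of $\sigma_{a,i}^p$ and reduce the lemma to a single quadratic-form inequality. Substituting the definition $\sigma_{a,i}^p = \frac{\sqrt{10}}{\ell}\,\|\psi_{a,\mu_i}\|_{V_a^p}$ cancels both the $\sqrt{10}$ and the $\ell$, giving
$$\frac{\ell\,\sigma_{a,i}^p}{\sqrt{10L}} = \frac{\|\psi_{a,\mu_i}\|_{V_a^p}}{\sqrt{L}}.$$
Hence the claim is equivalent to $\|\psi_{a,\mu_i}\|_{V_a^p}^2 = \psi_{a,\mu_i}^\top V_a^p \psi_{a,\mu_i} \geq L$, and I would prove the lemma in this cleaner quadratic-form version, which also makes the role of the symmetry of $V_a^p$ explicit.

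Next I would bring in the explicit form of $V_a^p$ used in the proof of Lemma~\ref{LT1}, namely $V_a^p = G^\dagger$ with $G = \sum_{j\in\R_a^{p-1}} f_{a,j}^{p-1}\,\frac{\psi_{a,\mu_j}\psi_{a,\mu_j}^\top}{\|\psi_{a,\mu_j}\|^2}$, a symmetric positive semidefinite matrix. Writing each $\psi_{a,\mu_j} = \|\psi_{a,\mu_j}\|\,e_j$ with $e_j$ a unit vector gives $G = \sum_{j} f_{a,j}^{p-1} e_j e_j^\top$, and since $i \in \R_a^{p-1}$ the vector $\psi_{a,\mu_i}$ lies in the range of $G$. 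For any vector in the range, Cauchy--Schwarz applied to $G^{1/2}\psi$ and $(G^\dagger)^{1/2}\psi$ yields the pseudo-inverse lower bound $\psi^\top G^\dagger \psi \geq \|\psi\|^4 / (\psi^\top G\,\psi)$. Thus it suffices to upper-bound the ordinary quadratic form $\psi_{a,\mu_i}^\top G\,\psi_{a,\mu_i}$ and to lower-bound $\|\psi_{a,\mu_i}\|$, the latter supplied directly by Assumption~1 via $\|\psi_{a,\mu_i}\| \geq \ell$ together with $\|\psi_{a,\mu_i}\| \leq L \leq 1$.

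The hard part will be exactly the upper bound on $\psi_{a,\mu_i}^\top G\,\psi_{a,\mu_i} = \sum_{j} f_{a,j}^{p-1}(\psi_{a,\mu_i}^\top e_j)^2$: a naive Cauchy--Schwarz gives only $\|\psi_{a,\mu_i}\|^2 \sum_j f_{a,j}^{p-1}$, which grows with the exploration counts and is far too weak to reach the target constant $L$. The crux is therefore to show that the aggregated, norm-normalized design matrix $G$ has its relevant eigenvalues controlled by a constant --- essentially that $G^\dagger$ is bounded below on the span of $\psi_{a,\mu_i}$ so the quadratic form does not collapse as exploration proceeds. I would attempt this by exploiting the normalization by $\|\psi_{a,\mu_j}\|^2$ together with the structure of the G-optimal counts $f_{a,j}^{p-1} = \lceil \pi_{a,j}^{p-1} f^{p-1}\rceil$, and I expect the careful bookkeeping of these counts against the norm bounds $\ell \leq \|\psi_{a,\mu_j}\| \leq L$ to be the delicate step; once that operator-norm control is in place, combining it with $\|\psi_{a,\mu_i}\| \geq \ell$ and $L \leq 1$ is routine algebra.
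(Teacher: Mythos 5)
Your opening reduction is correct and coincides with the paper's first move: since $\frac{\l\sigma_{a,i}^{p}}{\sqrt{10L}}=\frac{\|\psi_{a,\mu_i}\|_{V_a^p}}{\sqrt{L}}$, the claim is exactly $\|\psi_{a,\mu_i}\|_{V_a^p}^2\geqslant L$. The genuine gap is the step you defer to ``careful bookkeeping of the counts'': the constant upper bound on $\psi_{a,\mu_i}^{\top}G\,\psi_{a,\mu_i}$ that your Cauchy--Schwarz route requires, with $G=\sum_{j\in\R_a^{p-1}}f_{a,j}^{p-1}e_{a,j}e_{a,j}^{\top}$, does not exist. Keeping only the $j=i$ term already gives $\psi_{a,\mu_i}^{\top}G\,\psi_{a,\mu_i}\geqslant f_{a,i}^{p-1}\|\psi_{a,\mu_i}\|^{2}$, which grows linearly with the phase length, so your lower bound $\|\psi_{a,\mu_i}\|^{4}/(\psi_{a,\mu_i}^{\top}G\,\psi_{a,\mu_i})$ tends to $0$ rather than staying above $L$. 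This is not a fixable bookkeeping issue: $\|\psi_{a,\mu_i}\|_{V_a^p}$ is precisely the confidence width that the rest of the analysis (Lemma~\ref{LT4}) needs to shrink like $1/\sqrt{f^{p-1}}$; in the extreme case where all $e_{a,j}$ coincide one gets $\psi_{a,\mu_i}^{\top}V_a^p\psi_{a,\mu_i}=\|\psi_{a,\mu_i}\|^{2}/\sum_{j}f_{a,j}^{p-1}\rightarrow 0$. A phase-independent lower bound of $L$ therefore cannot be established along your route.

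For comparison, the paper's own proof takes a different and much shorter path: it writes $\|\psi_{a,\mu_i}\|_{V_a^p}^{2}-\psi_{a,\mu_i}^{\top}\psi_{a,\mu_i}=\psi_{a,\mu_i}^{\top}(V_a^p-I)\psi_{a,\mu_i}$ and asserts that every eigenvalue of $V_a^p-I$ is positive, i.e.\ $V_a^p\succeq I$. That is exactly the assumption your approach declines to make, and rightly so: once any $f_{a,j}^{p-1}>1$, the design matrix $G$ has an eigenvalue exceeding $1$, so $V_a^p=G^{\dagger}$ has a nonzero eigenvalue below $1$ (in addition to its null directions) and $V_a^p\succeq I$ fails. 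Moreover, even granting $\|\psi_{a,\mu_i}\|_{V_a^p}^{2}\geqslant\psi_{a,\mu_i}^{\top}\psi_{a,\mu_i}$, the paper's final chain would need $\psi_{a,\mu_i}^{\top}\psi_{a,\mu_i}/L\geqslant 1$ to reach the stated conclusion, whereas Assumption~1 only yields $\psi_{a,\mu_i}^{\top}\psi_{a,\mu_i}\leqslant L^{2}\leqslant L$. So the point at which your proposal stalls is not an oversight on your part but the point at which the statement itself is problematic; any correct completion would have to restrict to a regime (e.g.\ exploration counts of order one) in which the quadratic form $\psi_{a,\mu_i}^{\top}G\,\psi_{a,\mu_i}$ genuinely stays bounded.
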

\begin{proof} We have
\[
\left\|\psi_{a, \mu_i}\right\|_{V_{a}^{p}}^2 - \psi_{a, \mu_i}^{\top} \psi_{a, \mu_i} = \psi_{a, \mu_i}^{\top} \left( V_{a}^{p} - I\right) \psi_{a, \mu_i}.
\]
Since $V_a^p$ is a symmetric matrix, $V_a^p - I$ is also a symmetric matrix. By using the eigendecomposition of the symmetric matrix $V_a^p - I$, we know there exists an orthogonal matrix $U$ such that $\Lambda = diag\left( \lambda_1, \lambda_2, \cdots, \lambda_n\right) = U\left( V_a^p - I\right)U^{\top}$, where $\lambda_i > 0$ is the $i$-th eigenvalue of $V_a^p - I$. Therefore we get 
$$
\left\|\psi_{a, \mu_i}\right\|_{V_{a}^{p}}^2 - \psi_{a, \mu_i}^{\top} \psi_{a, \mu_i} = \psi_{a, \mu_i}^{\top} U^{\top} \Lambda U \psi_{a, \mu_i} = \left\|\Lambda^{\frac{1}{2}} U \psi_{a, \mu_i}\right\|_2^2 \geqslant 0
$$
and 
\begin{align*}
\frac{1}{L} \left( \left\|\psi_{a, \mu_i}\right\|_{V_{a}^{p}}^2 - \psi_{a, \mu_i}^{\top} \psi_{a, \mu_i}\right) &= \frac{\l^2 \left( \sigma_{a, i}^{p}\right)^2}{10 L} - \frac{\psi_{a, \mu_i}^{\top} \psi_{a, \mu_i}}{L}\\
& \geqslant \frac{\l^2 \left( \sigma_{a, i}^{p}\right)^2}{10 L} - 1 \geqslant 0.
\end{align*}
Thus $\frac{\l \sigma_{a, i}^{p}}{\sqrt{10 L}} \geqslant 1$ and this completes the proof.
\end{proof}

Recall the event $\E\left( \alpha\right)$
$$
\E\left( \alpha\right) := \{\exists p \in [H], i \in [M], a \in \A_{i}^{p-1}, \left|\hat{r}_{a, i}^{p}-r_{a, i}\right| \geqslant u_{a, i}^{p} = \alpha \sigma_{a, i}^{p}\},
$$
where $\scalemath{0.9}{\alpha = \min \left\{\sqrt{2 \log \left( 2 M K H / \delta\right)}, \sqrt{2 \log \left( K H / \delta\right) + d \log \left( k e\right)}\right\} }$, and $\sigma_{a, i}^{p}:= \frac{\sqrt{10}}{\ell} \left\|\psi_{a, \mu_i}\right\|_{V_{a}^{p}}$.
We refer to $\E\left( \alpha\right)$ as a ``bad" event and $\E^{c}\left( \alpha\right)$ as a "good" event. We define $\F_{p} := \left\{\hat{\theta}_{a, i}^{p}\right\}_{p \in [H], i \in [M], a \in \A_{i}^{p}}$, the information available at the end of phase $p$. 

In Lemma~\ref{LT2}, we show that the probability of bad events is only less than $\delta$. 
%
%
\subsection{Proof of Lemma~\ref{LT2}}\label{sec:LT2_proof}
Let $ \alpha = \min \left\{\alpha_{1}, \alpha_{2}\right\} $, where 
\begin{align}
&\alpha_{1} = \sqrt{2 \log \left( 2 M K H / \delta\right)} \nonumber \\ 
&\alpha_{2} = \sqrt{2 \log \left( K H / \delta\right) + d \log \left( k e\right)} \nonumber 
\end{align}
From Theorem~\ref{Thm}, we know $k > 1$ and $k \geqslant \left\{\frac{\alpha_{2}^{2}}{d}\right\}$. This choice of $k$ requires $\alpha_2^2 \geqslant d$. Based on the definition of $\E \left( \alpha\right)$, we have
\[
\E \left( \alpha\right) = \E \left( \min \left\{\alpha_{1}, \alpha_{2}\right\} \right) \supset \E \left( \max \left\{\alpha_{1}, \alpha_{2}\right\} \right),
\]
which implies that $\mP [\E \left( \alpha\right)] = \max \left( \mP [\E \left( \alpha_{1} \right) ], \mP [\E \left( \alpha_{2}\right)]\right)$. Therefore, we need to prove that $\mP [\E \left( \alpha_{i} \right)] \leqslant \delta$, for all $i \in \left\{1, 2\right\}$. In the following, we bound $\mP [\E \left( \alpha_{1} \right)]$ and $\mP [\E \left( \alpha_{2} \right)]$ separately. 

(i) Bound $\mP [\E\left( \alpha_{1} \right)]$. 
Based on Lemma~\ref{LT1} and Hoeffding's inequality, we have
\begin{align}
\mP [|\hat{r}_{a, i}^{p} - r_{a, i}| \geqslant \alpha_{1} \sigma_{a, i}^{p} \mid \F_{p} ] &\leqslant 2 \exp \left( - \frac{\alpha_{1}^{2} \left( \sigma_{a, i}^{p}\right)^{2}}{2 \left( \sigma_{a, i}^{p}\right)^2}\right) \nonumber \\ 
&= 2 \exp \left( - \frac{\alpha_{1}^{2}}{2}\right)= \frac{\delta}{M K H}. \nonumber
\end{align}

Then, by using the union bound, we get
\[
\begin{aligned}
\mP [\E \left( \alpha_{1}\right) ] &= \mP [\exists p \in [H], i \in [M], a \in \A_{i}^{p-1}, |\hat{r}_{a, i}^{p} - r_{a, i}| \geqslant \alpha_{1} \sigma_{a, i}^{p} ] \\
& \leqslant \sum_{p \in [H]} \sum_{i \in [M]} \sum_{a \in \A_{i}^{p-1}} \mP [|\hat{r}_{a, i}^{p} - r_{a, i}| \geqslant \alpha_{1} \sigma_{a, i}^{p} \mid \F_{p-1} ] \\
& \leqslant H M K \frac{\delta}{M K H} = \delta.
\end{aligned}
\]

(ii) Bound $\mP [\E \left( \alpha_{2} \right)]$. 
\begin{align}
&|\hat{r}_{a, i}^{p} - r_{a, i}| \leqslant \left|\psi_{a, \mu_i}^{\top} V_{a}^{p} \left(\sum_{j \in \R_{a}^{p-1}} \frac{e_{a, j}}{\left\|\psi_{a, \mu_j}\right\|} \left(\xi_{a, j}^{p-1} + 2f_{a, j}^{p-1}\right) \right)  \right|+ 2 \nonumber \\ 
& \leqslant \left\|\psi_{a, \mu_i}\right\|_{V_{a}^{p}} \cdot \left\|\sum_{j \in \R_{a}^{p-1}} \frac{e_{a, j}}{\left\|\psi_{a, \mu_j}\right\|} \left( \xi_{a, j}^{p-1} + 2f_{a, j}^{p-1}\right)\right\|_{V_{a}^{p}} + 2 \label{eq:label2_1} \\ 
&\leqslant \frac{\sigma_{a, i}^{p}}{\sqrt{10}} \left\|\sum_{j \in \R_{a}^{p-1}} \frac{e_{a, j}}{\left\|\psi_{a, \mu_j}\right\|} \l \left( \xi_{a, j}^{p-1} + 2 f_{a, j}^{p-1}\right)\right\|_{V_{a}^{p}} + \frac{2 \l \sigma_{a, i}^{p}}{\sqrt{10 L}} \label{eq:label2_add3}
\end{align} 

Eq.~\eqref{eq:label2_1} follows from Cauchy-Schwarz inequality. Eq.~\eqref{eq:label2_add3} follows from Lemma~\ref{LTadd1}. Therefore, when $\E \left( \alpha_{2}\right)$ happens, by the definition of the event $\E \left( \alpha\right)$, we have 
$$
\frac{1}{\sqrt{10}} \left\|\sum_{j \in \R_{a}^{p-1}} \frac{e_{a, j}}{\left\|\psi_{a, \mu_j}\right\|} \l \left( \xi_{a, j}^{p-1} + 2f_{a, j}^{p-1}\right)\right\|_{V_{a}^{p}} + \frac{2 \l}{\sqrt{10 L}} \geqslant \alpha_{2}
$$
hold for some phase $p$ and arm $a$.

Let us define the term 
$$
\Xi_{a, p} := \frac{1}{10} \left\|\sum_{j \in \R_{a}^{p-1}} \frac{e_{a, j}}{\left\|\psi_{a, \mu_j}\right\|} \l \left( \xi_{a, j}^{p-1} + 2f_{a, j}^{p-1}\right)\right\|_{V_{a}^{p}}^{2}
$$
for a given phase $p$ and arm $a$. Note that
\begin{align*}
\scalemath{0.9}{\Xi_{a, p} = \frac{1}{2} \sum_{i, j \in \R_{a}^{p-1}}\Biggl( \frac{\l \left( \xi_{a, i}^{p-1} + 2f_{a, i}^{p-1}\right) e_{a, i}^{\top}}{\sqrt{5} \left\|\psi_{a, \mu_i}\right\|} \cdot \left( \sum_{k \in \R_{a}^{p-1}} f_{a, k}^{p-1} e_{a, k} e_{a, k}^{\top}\right)^{\dagger}}\\
\scalemath{0.9}{\cdot \frac{\l \left( \xi_{a, j}^{p-1} + 2f_{a, j}^{p-1}\right) e_{a, j}}{\sqrt{5} \left\|\psi_{a, \mu_j}\right\|}\Biggr).}
\end{align*}
In the following proof, we use matrix form for $\Xi_{a, p}$. Since $f_{a, i}^{p-1}$ may equal 0 under the Block-Coordinate Ascent (BCA) algorithm \cite{huang2021federated} (Algorithm~3 in supplementary), and when this occurs, agent $i$ does not choose arm $a$ during the collaborative exploration step in phase $p$, even though $a$ is in the active arm set. In this case, $\xi_{a, i}^{p-1} = 0$. Therefore, we define $\zeta_{a, i}$ as follows.
$$
\zeta_{a, i} = \left\{\begin{array}{cl}
\frac{\l \left( \xi_{a, i}^{p-1} + 2f_{a, i}^{p-1}\right)}{\sqrt{5} \sqrt{f_{a, i}^{p-1}}\left\|\psi_{a, \mu_i}\right\|}, & \text{if } f_{a, i}^{p-1} \neq 0, \\
0, & \text{if } f_{a, i}^{p-1} = 0 .
\end{array}\right.
$$
Then, we know $ \{\zeta_{a, i}\}_{i \in \R_a^{p-1}}$ are conditionally independent 1-sub-Gaussian random variables.
we define vector $\zeta := \{\zeta_{a, i}\}_{i \in \R_{a}^{p-1}}$ and matrix $A := \left( a_{i, j}\right)_{i, j \in \R_{a}^{p-1}}$ where
$$
a_{i, j} = \sqrt{f_{a, i}^{p-1}} e_{a, i}^{\top} \left( \sum_{k \in \R_{a}^{p-1}} f_{a, k}^{p-1} e_{a, k} e_{a, k}^{\top}\right)^{\dagger} e_{a, j} \sqrt{f_{a, j}^{p-1}}. 
$$
Here $A$ is a symmetric matrix and
\begin{align*}
&\sum_{k \in \R_{a}^{p-1}} a_{i, k} a_{k, j} = \sum_{k \in \R_{a}^{p-1}} \Biggl(\sqrt{f_{a, i}^{p-1}} e_{a, i}^{\top} \left( \sum_{k \in \R_{a}^{p-1}} f_{a, k}^{p-1} e_{a, k} e_{a, k}^{\top}\right)^{\dagger}\\
&\cdot e_{a, k} \sqrt{f_{a, k}^{p-1}} \sqrt{f_{a, k}^{p-1}} e_{a, k}^{\top} \left( \sum_{k \in \R_{a}^{p-1}} f_{a, k}^{p-1} e_{a, k} e_{a, k}^{\top}\right)^{\dagger} e_{a, j} \sqrt{f_{a, j}^{p-1}}\Biggr)
\end{align*}
\begin{align*}
&= \sqrt{f_{a, i}^{p-1}} e_{a, i}^{\top} \left( \sum_{k \in \R_{a}^{p-1}} f_{a, k}^{p-1} e_{a, k} e_{a, k}^{\top}\right)^{\dagger} \cdot\left( \sum_{k \in \R_{a}^{p-1}} f_{a, k}^{p-1} e_{a, k} e_{a, k}^{\top}\right)\\
&\hspace*{3 cm}\cdot \left( \sum_{k \in \R_{a}^{p-1}} f_{a, k}^{p-1} e_{a, k} e_{a, k}^{\top}\right)^{\dagger} e_{a, j} \sqrt{f_{a, j}^{p-1}} \\
&= \sqrt{f_{a, i}^{p-1}} e_{a, i}^{\top} V_{a}^{p}\left( V_{a}^{p}\right)^{\dagger} V_{a}^{p} e_{a, j} \sqrt{f_{a, j}^{p-1}} \\
&= \sqrt{f_{a, i}^{p-1}} e_{a, i}^{\top} V_{a}^{p} e_{a, j} \sqrt{f_{a, j}^{p-1}}= a_{i, j}. 
\end{align*}
Therefore, we know $A^{2} = A$, which implies that all the eigenvalues of $A$ are either $1$ or $0$.
Also, we can get
$$
\begin{aligned}
&\mathrm{trace}\left( A\right) = \sum_{i \in \R_{a}^{p-1}} a_{i, i} \\
&= \sum_{i \in \R_{a}^{p-1}} \sqrt{f_{a, i}^{p-1}} e_{a, i}^{\top} \left( \sum_{k \in \R_{a}^{p-1}} f_{a, k}^{p-1} e_{a, k} e_{a, k}^{\top}\right)^{\dagger} e_{a, i} \sqrt{f_{a, i}^{p-1}} \\
&= V_{a}^{p} \left( \sum_{i \in \R_{a}^{p-1}} f_{a, i}^{p-1} e_{a, i} e_{a, i}^{\top}\right)^{\dagger} = V_{a}^{p} \left( V_{a}^{p}\right)^{\dagger}= \mathrm{rank}\left( V_{a}^{p}\right)= d_{a}^{p}.
\end{aligned}
$$
We know that all eigenvalues are either $1$ or $0$. Since the $\mathrm{rank}\left( A\right) = d_{a}^{p}$, there are exactly $d_{a}^{p}$ eigenvalues that are equal to $1$, and the rest of the eigenvalues are all $0$. Thus, $\mathrm{rank}\left( A\right) = d_{a}^{p} \leqslant d$.

From the definition of $\zeta$ and since $A^{2} = A$, we have $\Xi_{a, p} = \frac{1}{2} \zeta^{\top} A \zeta = \frac{1}{2} \|A \zeta\|^{2}$, where $\zeta$ is a conditionally 1 -sub-Gaussian random vector. Here, $A$ is an $\F_{p}$-measurable matrix. Therefore, for any $\lambda \in\left( 0,1 / 2\right)$, we have
\begin{align}
&\scalemath{0.9}{\mP [\left( \sqrt{\Xi_{a, p}} + \frac{2 \l}{\sqrt{10 L}}\right)^2 \geqslant \alpha_{2}^{2} \mid \F_{p-1}] \leqslant \mP [2 \Psi_{a, p} + \frac{4 \l^2}{5 L} \geqslant \alpha_{2}^{2} \mid \F_{p-1}]} \label{eq:label2_add4} \\
&= \mP [\|A \zeta\|^{2} + \frac{4 \l^2}{5 L} \geqslant \alpha_{2}^{2} \mid \F_{p-1}] \nonumber \\
&= \mP [e^{\lambda\|A \zeta\|^2 + \frac{4 \l^2 \lambda}{5 L}} \geqslant e^{\lambda \alpha_{2}^{2}} \mid \F_{p-1}] \nonumber \\
& \leqslant e^{- \lambda \alpha_{2}^{2}} \mathbb{E} [e^{\lambda\|A \zeta\|^2 + \frac{4 \l^2 \lambda}{5 L}} \mid \F_{p-1}] \label{eq:label2_2} \\
& \leqslant e^{- \lambda \alpha_{2}^{2} + \frac{4 \l^2 \lambda}{5 L}} \sqrt{\frac{1}{det \left( I_{d} - 2 \lambda A^{2}\right)}}\label{eq:label2_3}\\
&= e^{-\lambda \alpha_{2}^{2} + \frac{4 \l^2 \lambda}{5 L}} \left( 1 - 2 \lambda\right)^{-d_{a}^{p} / 2}\label{eq:label2_4}\\
& \leqslant e^{- \lambda \alpha_{2}^{2} + \frac{4 \l^2 \lambda}{5 L}} \left( 1 - 2 \lambda\right)^{-d / 2} \nonumber
\end{align}

Eq.~\eqref{eq:label2_2} follows from Markov's inequality, Eq.~\eqref{eq:label2_3} follows from Lemma~\ref{LT0}, Eq.~\eqref{eq:label2_4} follows from the fact that the eigenvalues of $A$ are either 1 or 0 and there are exactly $d_{a}^{p}$ number of $1$'s.

By choosing $\lambda = \frac{\alpha_{2}^{2} - d}{2 \alpha_{2}^{2}} \in \left( 0, \frac{1}{2}\right)$, we have
$$
\scalemath{0.9}{\mP [\left( \sqrt{\Xi_{a, p}} + \frac{2 \l}{\sqrt{10 L}}\right)^2 \geqslant \alpha_{2}^{2} \mid \F_{p-1}] \leqslant  \left( \frac{\alpha_{2}^{2}}{d}\right)^{d / 2} e^{-\frac{\alpha_{2}^{2}-d}{2} + \frac{4 \l^2 \lambda}{5 L}} \leqslant \frac{\delta}{K H}}
$$
The last inequality follows by the following analysis:
$$
\begin{aligned}
& \left( \frac{\alpha_{2}^{2}}{d}\right)^{d / 2} e^{-\frac{\alpha_{2}^{2}-d}{2} + \frac{4 \l^2 \lambda}{5 L}} \leqslant \frac{\delta}{K H} \\
&\scalemath{0.9}{\Leftrightarrow \frac{d}{2} \left( 2 \log \alpha_{2} - \log d\right) + \frac{d}{2} + \frac{4 \l^2 \lambda}{5 L} - \frac{1}{2}\left( \sqrt{2 \log \left( K H / \delta\right) + d \log \left( k e\right)}\right)^{2}}\\
& \hspace*{4 cm} \leqslant \log  \left( \frac{\delta}{K H}\right) \\
&\Leftrightarrow d \log \alpha_{2} \leqslant \frac{d \log \left( d k\right)}{2} - \frac{4 \l^2 \lambda}{5 L} \\
&\Leftrightarrow \alpha_{2}^{2} \leqslant d k e^{- \frac{8 \l^2 \lambda}{5 d L}} \\
&\Leftrightarrow \alpha_{2}^{2} \leqslant d k.
\end{aligned}
$$
The last step is assured by the definition of $k$.
We finish the proof by using the union bound over phase $p$ and arm $a$ that $\mP [\E \left( \alpha_{2}\right)] \leqslant H K \frac{\delta}{H K} = \delta$.
\qed

 Lemma~\ref{LT4} presents a bound on the regret for the good events. The proofs of both these results follows a similar approach as in \cite{huang2021federated} (Lemmas~13 and~14 in supplementary material) and uses the earlier lemmas of this paper (Lemmas~\ref{LT0}-\ref{LT2}).
 
\subsection{Proof of Lemma~\ref{LT4}}\label{app-2}
The proof follows similar steps as in \cite{huang2021federated}. The per-phase regret bound is different, in terms of a factor, from that in \cite{huang2021federated} as shown in the earlier lemmas of this paper. We present the proof below  for completeness

When $\E^{c}\left( \alpha\right)$ occurs, the following is true  for all $p \in [H], i \in [M],$ and $a \in \A_{i}^{p-1}$.
\[\left|\hat{r}_{a, i} - r_{a, i}\right| \leqslant u_{a, i}^{p} = \alpha \sigma_{a, i}^{p} = \alpha \frac{\sqrt{10}}{\ell} \left\|\psi_{a, \mu_i}\right\|_{V_{a}^{p}}.\]
Recall the definition of the active action set  $\A_{i}^{p}$ (line~\ref{line:action-set} of Algorithm~\ref{alg: client}). Thus if any active  action $a \in \A_{i}^{p}$ is selected in phase $p$, then we will incur a regret that is  upper bounded by
\begin{align}
\Delta_{a, i} &= r_{a_{i}^{\star}, i} - r_{a, i} \nonumber \\
&= r_{a_{i}^{\star}, i} - \hat{r}_{a_{i}^{\star}, i} - r_{a, i} + \hat{r}_{a, i} + \hat{r}_{a_{i}^{\star}, i} - \hat{r}_{a, i} \nonumber \\
&\leqslant u_{a_{i}^{\star}, i}^{p} + u_{a, i}^{p} + \hat{r}_{\hat{a_{i}}, i} - \hat{r}_{a, i} \label{eq:label3_1} \\
&\leqslant u_{a_{i}^{\star}, i}^{p} + u_{a, i}^{p} + u_{a, i}^{p} + u_{\hat{a}_{i}, i}^{p} \label{eq:label3_2} \\
&= u_{\hat{a}_{i}, i}^{p} + u_{a_{i}^{\star}, i}^{p} + 2 u_{a, i}^{p} \nonumber \\
&\leqslant 4 \max _{a \in \A_{i}^{p}} u_{a, i}^{p}. \label{eq:label3_3} 
\end{align}

Eq.~\eqref{eq:label3_1} follows from $\hat{r}_{a_{i}^{\star}, i} \leqslant \hat{r}_{\hat{a}_{i}, i}$. 
Eq.~\eqref{eq:label3_2} follows from $\hat{r}_{a, i}^{p} + u_{a, i}^{p} \geqslant \hat{r}_{\hat{a_{i}}, i}^{p} - u_{\hat{a}_{i}, i}^{p}$. 
Eq.~\eqref{eq:label3_3} follows from Lemma~\ref{LT3} that any optimal arm will never be eliminated

We know that the length of the phase in the Fed-PECD algorithm  is fixed as $f^{p} + K$ for $p \in [H]$. Then, given that the good event $\E^{c}\left( \alpha\right)$ happens, the total regret incurred during phase $p$, denoted as $R_p$, can be bounded as follows.
\begin{align}
R_p &\leqslant \sum_{i \in[M]} 4 \max _{a \in \A_{i}^{p}} u_{a, i}^{p} \left( f^{p} + K\right) \nonumber \\
&= 4 \left( f^{p} + K\right) \sqrt{\left( \sum_{i \in[M]} 1 \cdot \max_{a \in \A_{i}^{p}} u_{a, i}^{p}\right)^{2}} \nonumber \\ 
&\leqslant 4 \left( f^{p} + K\right) \sqrt{\left( \sum_{i \in[M]} 1^{2}\right) \left( \sum_{i \in[M]} \max_{a \in \A_{i}^{p}} \left( u_{a, i}^{p}\right)^{2}\right)} \label{eq:label3_4} \\ 
&= 4 \left( f^{p} + K\right) \sqrt{M \sum_{i \in[M]} \max_{a \in \A_{i}^{p}} \left( u_{a, i}^{p}\right)^{2}} \nonumber \\ 
&\leqslant \frac{4 \sqrt{10} \alpha L}{\l} \left( f^{p} + K\right) \sqrt{M \sum_{i \in[M]} \max_{a \in \A_{i}^{p}} \frac{\psi_{a, \mu_i}^{\top}}{\left\|\psi_{a, \mu_i}\right\|_{2}} V_{a}^{p} \frac{\psi_{a, \mu_i}}{\left\|\psi_{a, \mu_i}\right\|_{2}}} \label{eq:label3_5}
\end{align}
\begin{align}
&\hspace*{-4 mm}\leqslant \frac{4 \sqrt{10} \alpha L}{\l}\hspace*{-1 mm} \left( f^{p} + K\right)\hspace*{-2 mm} \sqrt{M\hspace*{-1 mm}\sum_{i \in[M]} \hspace*{-0.5 mm}\max_{a \in \A_{i}^{p}} e_{a, i}^{\top} \left( \sum_{j \in \R_{a}^{p}}\hspace*{-1 mm}\left\lceil\pi_{a, j} f^{p-1}\right\rceil e_{a, j} e_{a, j}^{\top}\right)^{\dagger}\hspace*{-1 mm} e_{a, i}} \label{eq:label3_6} \\ 
&\hspace*{-4 mm}\leqslant \frac{4 \sqrt{10} \alpha L}{\l} \left( f^{p} + K\right)\hspace*{-1 mm} \sqrt{\frac{M}{f^{p-1}} \hspace*{-1 mm}\sum_{i \in[M]} \max_{a \in \A_{i}^{p}} e_{a, i}^{\top} \left( \sum_{j \in \R_{a}^{p}} \pi_{a, j} e_{a, j} e_{a, j}^{\top}\right)^{\dagger} e_{a, i}} \label{eq:label3_7} \\ 
&\leqslant \frac{4 \sqrt{10} \alpha L}{\l} \left( f^{p} + K\right) \sqrt{\frac{d K M}{f^{p-1}}} \label{eq:label3_8} \\ 
&= \frac{4 \sqrt{10} \alpha L}{\l} \sqrt{d K M} \frac{f^{p} + K}{\sqrt{f^{p-1}}} \nonumber 
\end{align}

Eq.~\eqref{eq:label3_4} follows from Cauchy-Schwarz inequality that $\left( \sum_{i=1}^{n} u_{i} v_{i}\right)^{2} \leqslant \left( \sum_{i=1}^{n} u_{i}^{2}\right) \left( \sum_{i=1}^{n} v_{i}^{2}\right)$. 
Eq.~\eqref{eq:label3_5} follows from $u_{a, i}^{p} := \frac{\sqrt{10} \alpha}{\l} \left\|\psi_{a, \mu_i}\right\|_{V_{a}^{p}}$, $\l \leqslant \left\|\psi_{a, \mu_i}\right\|^{2} \leqslant L$. 
Eq.~\eqref{eq:label3_6} follows from $f_{a, i}^{p} := \left\lceil\pi_{a, i} f^{p-1}\right\rceil$. 
Eq.~\eqref{eq:label3_7} follows from $\left\lceil\ x \right\rceil \geqslant x$. 
Eq.~\eqref{eq:label3_8} follows from the fact that summation is  the solution to the multi-agent G-optimal design, which equals $\sum_{a} d_{a}^{p} \leqslant d K$.  

We note that the upper bound also holds for $p = 1$ when $f^{0}=1$. This is because in the initialization step the matrix $V_a^0$ is defined as $V_{a}^{0} := \left( \sum_{j \in [M]} e_{a, j} e_{a, j}^{\top}\right)^{\dagger} \preceq \left( \sum_{j \in \R_{a}^{p}}\left\lceil\pi_{a, j} f^{0}\right] e_{a, j} e_{a, j}^{\top}\right)^{\dagger}$ for any $\pi$. This is true although the central server does not utilize the G-optimal design to obtain $\pi^{0}$ during initialization. Thus, Eq.~\eqref{eq:label3_6} still holds and this completes the proof.
\qed

\end{document}